\documentclass[11pt]{preprint}

\usepackage{times}
\usepackage{latexsym}
\usepackage[T1]{fontenc}
\usepackage[utf8]{inputenc}
\usepackage{microtype}

\usepackage{graphicx}
\usepackage{booktabs}
\usepackage{tabularx}
\usepackage{multirow}
\usepackage{subcaption}
\usepackage{amsmath}
\usepackage{amssymb}
\usepackage{amsthm}
\usepackage{xcolor}
\usepackage{colortbl}
\usepackage{enumitem}
\usepackage{needspace}
\usepackage{adjustbox}
\usepackage[capitalize,noabbrev]{cleveref}

\newtheorem{proposition}{Proposition}
\newtheorem{lemma}{Lemma}

\newtheorem{assumption}{Assumption}
\crefname{property}{Property}{Properties}
\Crefname{property}{Property}{Properties}
\crefname{assumption}{Assumption}{Assumptions}
\Crefname{assumption}{Assumption}{Assumptions}

\newcommand{\FL}{\mathrm{FL}}
\definecolor{provisional}{HTML}{1F5FBF}

\definecolor{ourfix}{HTML}{006400}

\newcommand{\opt}[1]{{\small\textsf{#1}}}
\newcommand{\spear}{r_s}

\title{One Bias Score, Two Factors: How Mitigations Affect Option Bias in Multiple-Choice Evaluation}

\author{Hyunjong Ok}
\author{Minjae Park}
\author{Jaeho Lee}
\affiliation{Pohang University of Science and Technology (POSTECH)}
\contribution{\texttt{\{hyunjong.ok, mjae.park, jaeho.lee\}@postech.ac.kr}}
\usepackage{etoc}

\usepackage{placeins}

\renewcommand{\dbltopfraction}{0.85}
\begin{document}
\etocdepthtag.toc{mtbody}
\maketitle

\begin{abstract}
Reordering multiple-choice options can change a model's answer---an effect studied as \emph{selection bias} (a preference for certain option letters) and as \emph{order sensitivity} (changes in answers across option orderings). We connect these perspectives through the pooled selection-rate standard deviation (pooled SD), which measures how unevenly option letters are selected across questions and orderings. When every option of each question appears equally often at each position, the pooled SD is exactly the product of two label-free factors: the \emph{flip rate} and the \emph{positional skew}. The flip rate is the fraction of questions whose chosen option changes across orderings; the positional skew is the standard deviation of option-letter selection rates among those questions. Across 41 checkpoints and 12 benchmarks, the flip rate tracks accuracy and output uncertainty, while the positional skew tracks the estimated option-symbol prior. Prior correction for selection bias mainly reduces the positional skew. Aggregating each ordering with its reversal, a mitigation for order sensitivity, mainly reduces the flip rate. Both reduce the range of accuracy across correct-answer letters by similar amounts, but aggregation leaves fewer model-rank reversals. Applying prior correction before aggregation reduces both factors and further narrows this accuracy range. However, aggregation lowers the flip rate partly by settling questions on the same wrong option. We therefore recommend reporting both factors to distinguish mitigation effects, alongside accuracy to distinguish stability from correctness.
\end{abstract}

\section{Introduction}
\label{sec:intro}
\begin{figure*}[t]
\centering
\begin{subfigure}[b]{0.135\textwidth}
\includegraphics[width=\linewidth]{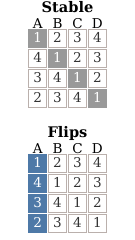}
\subcaption{What a flip is.}\label{fig:concept-a}
\end{subfigure}\hfill
\begin{subfigure}[b]{0.55\textwidth}
\includegraphics[width=\linewidth]{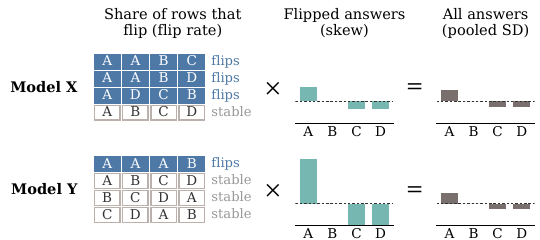}
\subcaption{The same pooled SD from different behaviours.}\label{fig:concept-b}
\end{subfigure}\hfill
\begin{subfigure}[b]{0.295\textwidth}
\includegraphics[width=\linewidth]{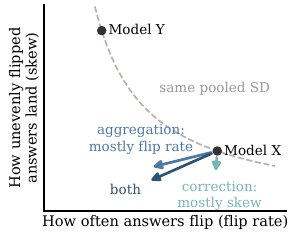}
\subcaption{The factor each method moves.}\label{fig:concept-c}
\end{subfigure}
\caption{\textbf{The same pooled SD can come from different behaviours, and mitigation methods lower it
through different factors.} (a) An answer that stays on one option lands once on every letter; one that
stays on letter \opt{A} changes option each time, a flip. (b) Rows are questions; bars give each
letter's share of flipped and of all answers from the dashed $1/4$. Two constructed models reach the same pooled SD:
Model X flips often with a mild lean to \opt{A}, Model Y rarely but strongly. (c) Each method's measured
mean direction; the dashed curve joins equal pooled SDs.}
\label{fig:concept}
\end{figure*}

Multiple-choice evaluation of language models depends on the order in which the options are presented. Multiple-choice benchmarks rank models by how often they select the correct option, assuming that a model's choice depends on what the options say, not on where they appear. Yet reordering the options, without changing which answer is correct, can change model selections enough to alter these rankings \citep{alzahrani-etal-2024-benchmarks}. 

Two concerns organise the work on this effect, which we call option bias. \textit{Selection bias} is a preference for certain options by their letter (the option symbol: \opt{A}, \opt{B}, \opt{C}, \ldots); it is summarised with statistics pooled over a benchmark's answers and reduced by \textit{prior correction}, which divides out an estimated option-symbol prior \citep{zheng2024large,reif-schwartz-2024-beyond}, or by intervening on the model \citep{choi-etal-2025-mitigating}. \textit{Order sensitivity} is the change of an answer with the order; it is counted per item \citep{pezeshkpour2024large,li2024perteval} and handled by \textit{aggregation} over several orderings \citep{pezeshkpour2024large,wei-etal-2024-unveiling}. Selection bias is thus read over pooled answers, order sensitivity per question.
\looseness=-1

Mitigation methods are commonly compared by a \textit{pooled bias score} \citep[e.g.,][]{zheng2024large,choi-etal-2025-mitigating}: a single number computed over all of a benchmark's answers that grows as the answers concentrate on some letters or as accuracy varies across them. One such score is the \textit{pooled selection-rate standard deviation} (\textit{pooled SD}), the standard deviation across letters of each letter's share of the answers. Two methods with similar gains can change different parts of option bias. Take one method from each line of work. A prior correction divides each letter's probability by an estimate of how much the model favours that letter whatever option it holds \citep{zheng2024large}; an aggregation asks each question in a given order and in its reverse and picks the option whose two probabilities have the largest product \citep{wei-etal-2024-unveiling}. Both roughly halve how much accuracy changes with the letter holding the correct answer, yet only the aggregation makes answers change less often when the options are reordered: it removes about a third of the questions whose answer changes, while the correction removes almost none. It also leaves fewer reversals of model rankings than the correction, and it lowers flips partly by settling questions on a wrong answer.
\looseness=-1

The two methods narrow this accuracy gap alike yet change different behaviour because a pooled bias score mixes two things: how often answers change with the order, and how the changing answers lean. Many such scores start from how often each letter is chosen, or how much probability it receives \citep{zheng2024large,pezeshkpour2024large,li-etal-2024-multiple,choi-etal-2025-mitigating}. A model blind to the order would spread its answers evenly, a quarter on each of four letters, while one that favours \opt{A} puts more on \opt{A}, raising the pooled SD. Now ask every question once per rotation of its options, \opt{A~B~C~D}, then \opt{D~A~B~C}, and so on, so that each option appears once at every position, a \textit{balanced} set of orderings. A question whose answer stays on the same option lands once on every position, whereas one answered \opt{A} under every rotation picks a different option each time and puts all its answers on \opt{A} (\cref{fig:concept-a}). Only changing answers make the shares uneven.

From this, the pooled SD is exactly the product of two factors that need no gold labels: the \textit{flip rate}, the
share of questions whose chosen option changes across rotations, and the \textit{positional skew}, how unevenly those questions' answers fall on the letters. Two models can share a pooled SD, one with many flips spread nearly
evenly and one with few concentrated on particular letters (\cref{fig:concept-b}).

The two factors show which behaviour a model or a mitigation method has changed: on their plane, prior correction moves a model mostly along the skew and the aggregation mostly along the flip rate (\cref{fig:concept-c}). The selection-bias statistic RStd \citep{zheng2024large}, the spread of accuracy over the letters that hold the correct option, splits the same way under the same orderings, into the flip rate times how unevenly the flipping questions' accuracy spreads over those letters. A consistency count says nothing about where changing answers go; a pooled bias score mixes both factors. Prior work reports consistency beside position preference \citep{guda2025quantifying,shi-etal-2025-judging}. We make this pairing exact and, since neither statistic alone shows which factor a mitigation changed, propose reporting both factors with accuracy.

\paragraph{Contributions.}
\begin{enumerate}[leftmargin=1.4em,itemsep=2pt,topsep=2pt,parsep=0pt]
\item \textbf{An exact, label-free factorisation} of the pooled SD into the flip rate and positional skew under balanced orderings, extending to RStd.
\item \textbf{An operation-level account of mitigation}: prior correction mainly lowers the skew and
aggregation over an ordering and its reversal mainly the flip rate. The two narrow the accuracy gap across correct-answer letters by similar amounts but leave different rank reversals and wrong answers, and they compose.
\item \textbf{Evidence for what each factor captures} across 41 checkpoints and 12 benchmarks: the flip rate tracks accuracy and output uncertainty, and the skew the model's option-symbol prior.
\end{enumerate}

\section{Related Work}
\label{sec:related}

We discuss the most relevant literature here and provide a broader discussion in \cref{app:related}.

\paragraph{Selection bias and prior correction.}
\citet{zheng2024large} define selection bias as a preference for certain option symbols (option IDs in their terms), identify a prior over
ID tokens as a primary contributor alongside model- and task-dependent position effects, and measure it with
RStd, the SD of classwise recalls, under the premise that correct answers are randomly placed. Their PriDe
estimates the prior from permutations of a few items and divides it out of the rest; they combine it with
partial permutation averaging and relate the estimated prior to option-position recalls. Related
corrections prune bias nodes or add an auxiliary option
\citep{choi-etal-2025-mitigating}, intervene on attention heads and feed-forward vectors
\citep{zhou2024unibias}, or correct label priors in in-context classification, where content-free estimates
can hurt \citep{fei2024mitigating,reif-schwartz-2024-beyond}. \citet{guda2025quantifying} observe that, on average,
PriDe lowers RStd but raises other measures, evaluating pooled and per-item measures together. We
distinguish the quantities measured and the operations applied rather than assigning each paper one
perspective, and show which factor of a pooled score a correction moves.

\paragraph{Order sensitivity and aggregation.}
Order sensitivity is conjectured to arise from an interaction between uncertainty among a model's
leading choices and question-dependent positional preferences, with the uncertainty part supported by GPT-4
answer probabilities and accuracy improved by a majority vote over reorderings
\citep{pezeshkpour2024large}.
\citet{wei-etal-2024-unveiling} measure a two-ordering fluctuation rate, find more fluctuation on subtasks with
lower accuracy in most tested models, and multiply each content's probabilities under an ordering and
its reversal; they also study probability calibration. Inconsistency under answer-preserving perturbations has been read as a sign that single-format scores
overstate knowledge and built into scoring \citep{li2024perteval,gupta2024changing}, or reported as
reliability beside accuracy \citep{cocchieri-etal-2026-remedqa}.
These measures count changes under their chosen perturbations; we relate the flip rate to pooled scores under a balanced design and show that which orderings an aggregation pairs changes whether it also
lowers the skew.

\paragraph{Reporting both quantities.}
LLM-judge audits report how often an order effect occurs beside which position it favours
\citep{shi-etal-2025-judging,wang-etal-2024-large-language-models-fair}, and for two candidates the raw
fairness score of \citet{shi-etal-2025-judging} already combines the two. We give the exact factorisation of the pooled SD under any balanced ordering design for multiple-choice evaluation and use it to
read what mitigation operations change.
\looseness=-1

\section{Two factors in pooled bias statistics}
\label{sec:method}

\subsection{Cyclic reordering and the flip rate}
\label{sec:method_protocol}
\label{sec:method_stats}

\begin{figure*}[t]
\centering
\includegraphics[width=0.9\linewidth]{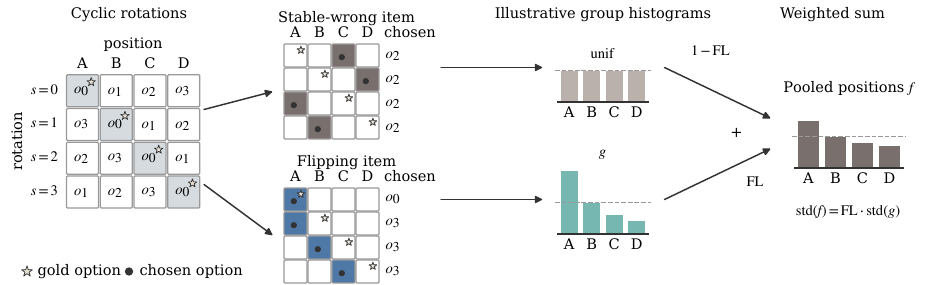}
\caption{\textbf{An item that does not flip lands once on every position, so only flipping items can make the pooled shares uneven.} Left: rotation moves every content through every position. Middle: a stable-wrong item's answers spread evenly; a flipping item's answers pile onto early letters. Right: pooling weights the even share by $1-\FL$ and the flipping items' share $g$ by $\FL$. Stars mark the gold option, dots the choice and dashes $1/K$; the histograms are illustrative.}
\label{fig:protocol}
\end{figure*}

We ask every item once per rotation of its options and record which option the model chooses. Fix a
checkpoint and a benchmark of $N$ items. Each item $q$ has $K$ answer options, indexed by the benchmark's
canonical order. An option's \textit{content} is its text, wherever it is shown; let $o_j$ denote the content of
option $j$, and $c_q$ the index of the correct option. For rotation $s \in \{0, \ldots, K-1\}$, the
cyclic ordering $\pi_s$ puts content $o_j$ at position
\begin{equation}
  \pi_s(j) = (j + s) \bmod K,
  \label{eq:shift}
\end{equation}
where positions $0, 1, \ldots, K-1$ are the letters \opt{A}, \opt{B} and so on. Thus $\pi_0$ is the canonical order, and over
the $K$ orderings every content visits every position once. Only the order changes; the question, options and correct option stay fixed (\cref{fig:protocol}).

Let $z_q(\pi)$ be the index of the content chosen under ordering $\pi$, and $a_q(\pi) = \mathbf{1}[z_q(\pi) =
c_q]$ its correctness. Accuracy $A = \frac{1}{NK}\sum_{q}\sum_{s} a_q(\pi_s)$ averages correctness over all
items and all orderings in the cyclic set $S_{\mathrm{cyc}} = \{\pi_0, \ldots, \pi_{K-1}\}$, not only the
canonical one. The \textit{flip rate} is the share of items whose chosen content changes with the ordering:
\begin{equation}
  \FL = \tfrac{1}{N} \bigl|\{q : z_q \text{ non-constant on } S_{\mathrm{cyc}}\}\bigr| .
  \label{eq:fl}
\end{equation}
A change is counted by content, not by letter: an item answered \opt{A} under every ordering flips, as \opt{A} holds
a different content each time. The flip rate needs no gold label. Each item is thus flipping or stable; a stable item is \textit{stable-correct} if its choice is correct and \textit{stable-wrong}
otherwise.

Accuracy limits how many items can flip. The $NK$ answers include $NK(1 - A)$ wrong ones, and every flipping
item gives at least one of them, so
\begin{equation}
  \FL \le \min\bigl(1, K(1 - A)\bigr) ,
  \label{eq:fl_cap}
\end{equation}
which we call the \textit{flip cap}. In turn, the flip rate bounds how much accuracy can change with the order:
a stable item is right under every cyclic ordering or wrong under all of them, so the accuracies under any
two cyclic orderings differ by at most the flip rate. The flip rate over the cyclic orderings is also a lower bound on
its value over all $K!$ orderings (\cref{app:orderings}).

\subsection{The pooling identity}
\label{sec:method_factors}

Selection bias shows up in where the answers land. Let $f_q(r)$ be the share of item $q$'s $K$ answers that
fall on position $r$, and $f(r) = \frac{1}{N}\sum_q f_q(r)$ the share of all answers at $r$. The \textit{pooled selection-rate SD}, or pooled SD, is
\begin{equation}
  \operatorname{std}(f) = \Bigl(\tfrac{1}{K}\textstyle\sum_r \bigl(f(r) - \tfrac{1}{K}\bigr)^2\Bigr)^{1/2} ,
  \label{eq:pooled_sd}
\end{equation}
which is zero when the answers are spread evenly and grows as they pile onto some positions. \citet{zheng2024large}
consider these per-position counts a simple measure of selection bias but set them aside, because a
benchmark may place its correct answers unevenly over the letters and so tilt the counts without any bias.
Balanced orderings place the correct answers evenly and remove this concern. Related statistics average
probabilities instead of counts \citep{pezeshkpour2024large} or compare predicted and gold distributions \citep{choi-etal-2025-mitigating}.

Under the cyclic orderings, an item that does not flip cannot skew $f$ (\cref{fig:protocol}). Such an item
chooses the same content $o_{z_q}$ under every ordering, and ordering $\pi_s$ shows it at position
$(z_q + s) \bmod K$, so its $K$ answers land once on every position and $f_q$ is the uniform share
$\mathrm{unif}(r) = 1/K$. Let $\mathcal{F}$ be the set of flipping items and
$g = \frac{1}{|\mathcal{F}|}\sum_{q \in \mathcal{F}} f_q$ the share of their answers at each position (defined when
some item flips).
Averaging $f_q$ separately over the stable and the flipping items, which make up shares $1 - \FL$ and $\FL$
of all items, gives the pooled share at each position
\begin{equation}
  f = (1 - \FL)\,\mathrm{unif} + \FL \cdot g .
  \label{eq:pooling_mixture}
\end{equation}
\textbf{Pooling identity.} Subtracting $\mathrm{unif}$ from both sides gives $f - \mathrm{unif} = \FL\,(g - \mathrm{unif})$, so the SD
of $f$ factors as
\begin{equation}  \underbrace{\operatorname{std}(f)}_{\text{pooled SD}}
  = \underbrace{\FL}_{\text{flip rate}} \cdot
  \underbrace{\operatorname{std}(g)}_{\text{positional skew}} .
  \label{eq:rstd_factor}
\end{equation}
The first, the flip rate, is how often order changes an
answer. The second, the \textit{positional skew} $\operatorname{std}(g)$, is how unevenly the flipping items'
answers spread over positions, and $g - \mathrm{unif}$ shows which positions they favour. Neither needs gold labels.

The identity is arithmetic: it says how the pooled SD is composed, not why a model flips or leans. The same
pooled SD can come from many flips with a mild lean or from few flips with a strong one. Two of our
checkpoint--benchmark pairs, which we call cells, have nearly the same pooled SD at opposite
combinations of the two factors
(\cref{fig:decomposition}). Flips are needed for an uneven $f$ but do not guarantee one, since flipping items' leans can cancel.

\begin{figure*}[t]
\centering
\includegraphics[width=0.9\linewidth]{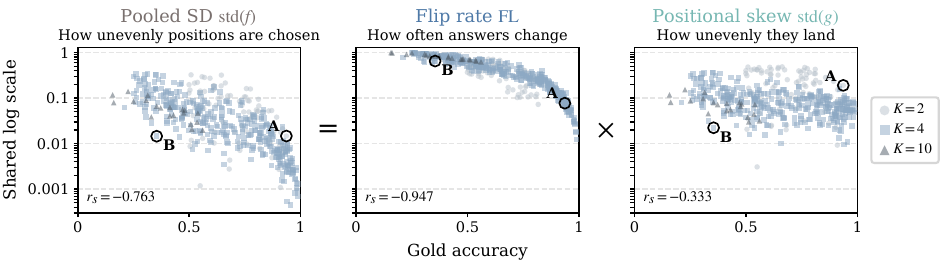}
\caption{\textbf{The pooled SD is the flip rate times the skew; the flip rate tracks accuracy, the skew only weakly.} Points are cells, coloured by option count $K$; $\spear$ is Spearman's correlation with accuracy. Cells A (Qwen2.5-72B-Instruct, PIQA) and B (Llama-3.2-3B, LogiQA) reach nearly the same pooled SD from opposite factor combinations.}
\label{fig:decomposition}
\end{figure*}

The identity needs only that every content appears at every position equally often, so it holds for any
such balanced set of orderings, including all $K!$. Under an unbalanced set a stable item can also skew $f$;
subtracting the stable items' deviation from uniform, weighted by their share, restores the identity
without gold labels
(\cref{app:any_design}).

\subsection{What each kind of statistic reads}
\label{sec:method_metrics}

Gold-label statistics such as RStd ask instead how accuracy depends on the letter that holds the correct
option, and they split the same way. Let $A_j$ be the accuracy when every item is shown in the cyclic ordering
that puts its correct option at letter $j$. Each item has exactly one such ordering, in which a
stable-correct item is right and a stable-wrong item wrong, so the accuracy at letter $j$ is
\begin{equation}
  A_j = \mathrm{SC} + \FL \cdot h_j ,
  \label{eq:placement}
\end{equation}
with $\mathrm{SC}$ the share of stable-correct items and $h_j$ the accuracy of the flipping items when their
correct option is at letter $j$ (likewise defined when some item flips). RStd \citep{zheng2024large} is the SD of $A_j$ over the letters; computed
under balanced orderings, as here, we call it \textit{balanced RStd}, and it equals $\FL \cdot
\operatorname{std}(h)$. The \textit{placement range}, $\max_j A_j - \min_j A_j$, is how much accuracy can change
when every correct option is moved to one letter, the cyclic form of the answer-moving attack of
\citet{zheng2024large}; it equals $\FL \cdot (\max_j h_j - \min_j h_j)$. The vector $g$ records where the flipping
items' chosen answers land, and $h_j$ their accuracy when the correct option is at letter $j$. With two options
$h$ and $g$ coincide; with more, their SDs rank cells similarly, but neither vector determines the other (\cref{app:formal_skew_placement}).

\begin{wraptable}{R}{0.48\textwidth}
\setlength{\columnwidth}{\linewidth}
\centering
\small
\setlength{\tabcolsep}{24pt}
\begin{tabular}{@{}lc@{}}
\toprule
Statistic & Value \\
\midrule
\multicolumn{2}{@{}l}{\textit{Order sensitivity}} \\
Flip rate & $\FL$ \\
Two-order disagreement & ${\le}\,\FL$ \\
\addlinespace
\multicolumn{2}{@{}l}{\textit{Selection bias}} \\
Pooled SD & $\FL \cdot \operatorname{std}(g)$ \\
Balanced RStd & $\FL \cdot \operatorname{std}(h)$ \\
Forward KL to uniform & ${\le}\,\FL \cdot D_{\mathrm{KL}}(g \,\|\, \mathrm{unif})$ \\
Placement range & $\FL \cdot \operatorname{range}(h)$ \\
\bottomrule
\end{tabular}
\caption{\textbf{Consistency statistics count changing answers; the pooled SD and balanced RStd factor through the flip rate, and the forward KL is bounded through it.} Values under balanced orderings ($\le$: bound for two evaluated orders); $g$ is where flipping answers land and $h_j$ their accuracy with the correct option at $j$. Only the pooled SD splits into two label-free factors.}
\label{tab:metric_map}
\end{wraptable}

\Cref{tab:metric_map} sorts the statistics by what they read. Consistency statistics count items whose answer
changes. The flip rate counts a change between any of the evaluated orderings; a two-order disagreement rate,
such as the fluctuation rate of \citet{wei-etal-2024-unveiling}, counts only changes between its two orders,
so it is at most the flip rate when both orders are among those evaluated. Neither says where the changing
answers go. The pooled SD and balanced RStd multiply the flip rate by a lean of the flipping items: the skew
$\operatorname{std}(g)$ for the pooled SD, read from predictions alone, and $\operatorname{std}(h)$
for balanced RStd, which needs gold labels. Forward KL is only bounded,
$D_{\mathrm{KL}}(f \,\|\, \mathrm{unif}) \le \FL \cdot D_{\mathrm{KL}}(g \,\|\, \mathrm{unif})$. None of
these statistics reads the lean of the changing answers on its own. A pooled bias score moves with either factor without telling which, and a consistency count sees only the first;
reading the flip rate and the skew together separates them. \Cref{app:formal} proves these results.
\looseness=-1

\section{Experimental setup}
\label{sec:setup}

\paragraph{Models and benchmarks.}
We evaluate 41 open-weight checkpoints from seven model families, pretrained and post-trained, on 12
benchmarks with two, four or ten options, such as PIQA \citep{bisk2020piqa}, a five-subject
subset of MMLU \citep{hendrycks2021measuring} and MMLU-Pro \citep{wang2024mmlupro}. A \emph{cell} is one checkpoint--benchmark pair; correlations across cells count each once.

\paragraph{Readouts.}
The answer is the option letter with the highest next-token probability, the \emph{first-token readout}.
Five-shot prompting, and direct-answer and chain-of-thought prompting \citep{wei2022chainofthought} with
greedy decoding, serve as checks.

\paragraph{Mitigation operations.}
The operations rework the stored letter probabilities and are compared on the same items. \emph{Prior
correction} divides every query's letter probabilities by an option-symbol prior, the model's preference for
each letter, estimated on a held-out 5\% of a cell's items as in PriDe's reported setting: PriDe's estimate
\citep{zheng2024large} or the mean letter probability, a variant of batch calibration \citep{zhou2024batch}. \emph{Aggregation} multiplies, for each
content, its probabilities under the presented ordering and under its reversal, the probability product of
\citet{wei-etal-2024-unveiling}, which we call the \emph{reversal product}, and returns the content with the
largest product. A \emph{composition} applies two operations in sequence, chiefly a correction and then the product. Every operation returns one
answer per cyclic ordering, and both factors are computed over these answers.

\paragraph{Statistics.}
Correlations are Spearman ($\spear$) unless stated otherwise, and differences between conditions on the same
cells carry 95\% checkpoint-block intervals, which resample whole checkpoints. \Cref{app:pools} gives the
full protocol.

\section{What the two factors capture}
\label{sec:factors}

\begin{figure*}[t]
\begin{minipage}[t]{0.48\textwidth}\vspace{0pt}
\setlength{\columnwidth}{\linewidth}
\centering
\includegraphics[width=0.8\linewidth]{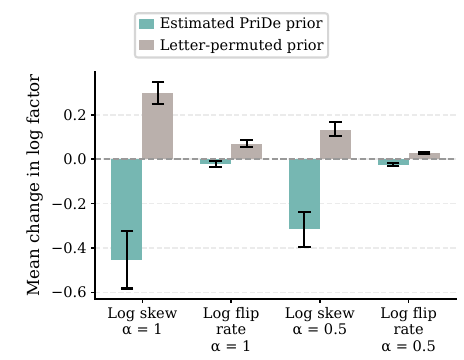}
\captionof{figure}{\textbf{Dividing out the model's own prior lowers the skew; the same prior with permuted letters raises it.} Mean change in log skew and log flip rate on held-out items after dividing by PriDe's prior raised to $\alpha$, or by the letter-permuted prior.}
\label{fig:factors}
\end{minipage}\hfill
\begin{minipage}[t]{0.48\textwidth}\vspace{0pt}
\setlength{\columnwidth}{\linewidth}
\centering
\includegraphics[width=0.85\linewidth]{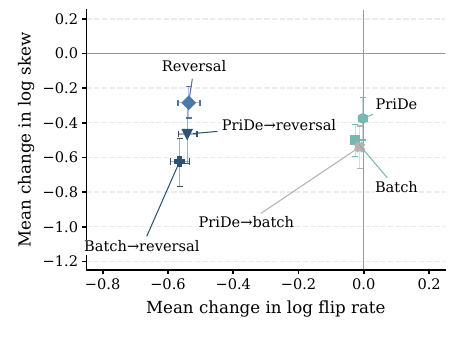}
\captionof{figure}{\textbf{Prior correction mainly lowers the skew; the reversal product mainly lowers the flip rate and part of the skew.} Mean change in log flip rate and log skew from no operation on the same held-out items of cells with at least three options; X$\to$Y applies X and then Y. Their sum is the change in log pooled SD.}
\label{fig:operations}
\end{minipage}
\end{figure*}

This section shows that the two factors of the identity record different behaviour.

\subsection{Flips track accuracy and uncertainty}
Accurate models flip less. Across cells, the flip rate falls with accuracy ($\spear = -0.95$), far more
closely than the pooled SD or the skew does (\cref{fig:decomposition}).
This is not only because accuracy caps the flip rate (\cref{eq:fl_cap}): the association holds where the cap
restricts nothing, and when accuracy is scored on items the flip rate does not read.
Within a cell, flips concentrate on items whose two leading options are close in probability under the
canonical order, a sign of output uncertainty: a smaller gap marks flipping rather than stable items with a
median AUC of $0.85$. This holds even among items wrong
under every ordering, so it is not accuracy in disguise. \citet{pezeshkpour2024large} conjecture such
uncertainty behind order sensitivity; whether the model's internal uncertainty causes the flips is left to
future work.
A low flip rate still does not mean high accuracy: an item that keeps one wrong answer does not flip.

\subsection{The skew reflects option-symbol bias}
Option-symbol bias is a model's preference for particular option letters, which PriDe estimates as a prior over
the letters \citep{zheng2024large}. We estimate this prior on the held-out 5\% of a cell's items and measure
the skew on the rest, and three results tie the two together. Models with a more uneven prior have a larger
skew, also given accuracy. The flipping answers lean towards the letters the
prior favours (median centred cosine $0.87$ on four-option cells, against $-0.10$ for the prior with its
letters permuted).
And dividing the prior out lowers the skew while the flip rate barely moves, whereas dividing out the permuted
prior raises it (\cref{fig:factors}). The skew is where a model's option-symbol bias shows in the changing answers; under cyclic orderings this letter preference is also a position preference.

Both factors are reliable: each ranks the cells nearly the same way on two random halves of their items, and
the two carry comparable shares of the variance of the log pooled SD across cells.
\Cref{app:factor_support} gives these analyses, the canonical-order gap and the controls on the flip rate.

\section{Mitigations move different factors}
\label{sec:operations}

\begin{figure*}[t]
\centering
\begin{subfigure}[t]{0.245\textwidth}
\includegraphics[width=\linewidth]{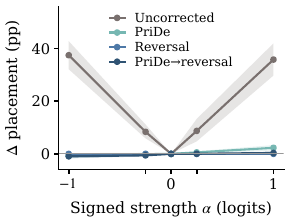}
\subcaption{\textbf{Front to back cancels.}}\label{fig:injection-a}
\end{subfigure}\hfill
\begin{subfigure}[t]{0.245\textwidth}
\includegraphics[width=\linewidth]{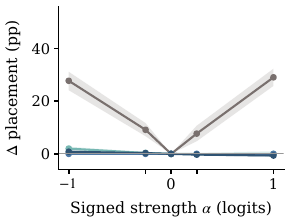}
\subcaption{\textbf{Alternating cancels.}}\label{fig:injection-b}
\end{subfigure}\hfill
\begin{subfigure}[t]{0.245\textwidth}
\includegraphics[width=\linewidth]{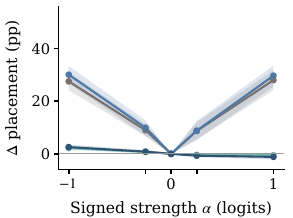}
\subcaption{\textbf{Outer versus inner passes.}}\label{fig:injection-c}
\end{subfigure}\hfill
\begin{subfigure}[t]{0.245\textwidth}
\includegraphics[width=\linewidth]{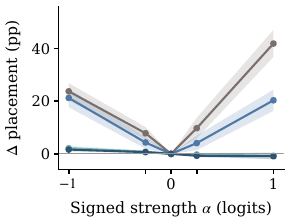}
\subcaption{\textbf{Letter \opt{A} alone partly passes.}}\label{fig:injection-d}
\end{subfigure}
\caption{\textbf{The reversal product removes an added letter preference only where mirrored letters differ, while PriDe largely removes each tested shape.} Change in placement range after adding $\alpha d_i$ to the logit of letter $i$ in every stored query of the MMLU cells, where $d$ is the panel's shape and a negative $\alpha$ reverses it; priors are re-estimated after the addition, and shading gives 95\% intervals. Under mirroring (\opt{A} with \opt{D}, \opt{B} with \opt{C}) the first two shapes cancel, the third passes through and the fourth has both parts.}
\label{fig:injection}
\end{figure*}

Prior correction mainly lowers the skew, and the reversal product mainly the flip rate together with part of
the skew (\cref{fig:operations}): correction changes mainly where changing answers land, the product mainly
whether answers change at all. Two exact properties help read
these patterns, stated for a \emph{letter preference}, an amount added to the log score of whatever content a
letter shows.

\paragraph{Correction acts within each ordering.}
Dividing by an estimated prior subtracts its log from each letter's log score in every query, but nothing
ties the answer under one ordering to the answer under another. In our experiments PriDe and batch calibration
lower the skew much more than the flip rate, also on the cells used to compare the corrections alone, which include
two-option cells.
There the small net change in the flip rate hides movement both ways: under PriDe some items stop
flipping and somewhat fewer start, while fewer of the previously flipping items' answers land on the prior's
favoured letter.
In PriDe's model an item's letter prior does not depend on the ordering, so removing it exactly would make the
answers agree, barring ties \citep{zheng2024large}; the correction instead uses one prior averaged over the estimation items,
and we do not separate estimation error, item-specific preferences and other departures from that model as
causes of the remaining flips.

\paragraph{The product averages mirrored letters.}
Multiplying a content's probabilities under an ordering and its reversal adds its two log scores. Reversal
moves the $i$th of $K$ letters to the $(K{+}1{-}i)$th, so the content receives the preferences of a letter and
its mirror together, \opt{A} with \opt{D} and \opt{B} with \opt{C} for four options. The part of a preference with opposite signs on
mirrored letters, such as a steady drift from \opt{A} to \opt{D}, cancels exactly; the part that is the same on them, such
as outer over inner letters, survives (\cref{app:operations_cancel}). Combining two orderings can also leave
fewer answers that change with the order, which the product does not guarantee but which we observe. Adding a
preference of known shape and strength $\alpha$ to every stored logit of the MMLU cells confirms these properties for added output preferences
(\cref{fig:injection}): front-to-back and alternating preferences leave every answer of the product unchanged,
an outer-versus-inner preference passes through it and widens its placement range, and PriDe, with its prior re-estimated, largely removes each tested shape. Models' own preferences
mix both parts: on four-option cells the part surviving mirroring keeps on average about half of a
preference's SD, and few preferences are monotone. This helps explain why the
product lowers the skew only partly; across cells, the larger the surviving part, the less the skew falls on
held-out items.

\paragraph{One statistic hides which factor moved.}
PriDe and the reversal product lower balanced RStd alike, from 0.064 to 0.036, but the product lowers the flip
rate far more. Judged by balanced RStd alone the two look similar; judged by the flip rate alone PriDe would seem
to change nothing, although it narrows the placement range about as much as the product
(\cref{sec:composition}). \Cref{app:operations} details the operations and the added preferences.
\looseness=-1

\section{Residual harms and composition}
\label{sec:composition}

\begin{table*}[t]
\centering
\small
\setlength{\tabcolsep}{8pt}
\resizebox{\textwidth}{!}{%
\begin{tabular}{lrrrrrr}
\toprule
Method & $\Delta\log\mathrm{FL}$ & $\Delta\log\mathrm{Skew}$ & \shortstack{Placement\\range (pp)} & \shortstack{Rank reversals\\(\% of pairs)} & \shortstack{$\Delta$ Accuracy\\(pp)} & \shortstack{$\Delta$ Stable-wrong\\(pp)} \\
\midrule
Baseline & 0.00 & 0.00 & 16.7 & 15.4 & 0.0 & 0.0 \\
PriDe \citep{zheng2024large} & $-0.00$ & $-0.38$ & 9.6 & 15.4 & 0.5 & 0.3 \\
Batch calibration, variant \citep{zhou2024batch} & $-0.03$ & $-0.50$ & 8.2 & 15.4 & 0.5 & 0.5 \\
Reversal product \citep{wei-etal-2024-unveiling} & $-0.54$ & $-0.28$ & 8.2 & 10.6 & 2.8 & 6.2 \\
PriDe then reversal & $-0.54$ & $-0.47$ & 5.3 & 10.6 & 2.9 & 6.6 \\
Batch then reversal & $-0.56$ & $-0.62$ & 4.6 & 10.5 & 2.9 & 6.8 \\
PriDe then batch & $-0.01$ & $-0.54$ & 7.7 & 15.8 & 0.5 & 0.4 \\
\midrule
\multicolumn{7}{l}{Paired differences on the same cells} \\
Comparison & \multicolumn{2}{c}{$\Delta\log$ pooled SD} & \multicolumn{2}{c}{$\Delta$ Placement range (pp)} & \multicolumn{2}{c}{\shortstack{$\Delta$ Rank reversals\\(pp of pairs)}} \\
Reversal product $-$ PriDe & \multicolumn{2}{c}{$-0.44$ [$-0.61$, $-0.26$]} & \multicolumn{2}{c}{$-1.3$ [$-2.5$, 0.0]} & \multicolumn{2}{c}{$-4.8$ [$-7.0$, $-2.9$]} \\
Reversal product $-$ batch & \multicolumn{2}{c}{$-0.29$ [$-0.43$, $-0.15$]} & \multicolumn{2}{c}{0.1 [$-0.9$, 1.2]} & \multicolumn{2}{c}{$-4.8$ [$-6.9$, $-3.0$]} \\
PriDe then reversal $-$ PriDe & \multicolumn{2}{c}{$-0.63$ [$-0.73$, $-0.53$]} & \multicolumn{2}{c}{$-4.3$ [$-5.0$, $-3.7$]} & \multicolumn{2}{c}{$-4.8$ [$-7.0$, $-2.9$]} \\
PriDe then reversal $-$ reversal & \multicolumn{2}{c}{$-0.18$ [$-0.37$, $-0.00$]} & \multicolumn{2}{c}{$-3.0$ [$-4.4$, $-1.8$]} & \multicolumn{2}{c}{0.0 [$-1.0$, 1.0]} \\
Batch then reversal $-$ batch & \multicolumn{2}{c}{$-0.66$ [$-0.75$, $-0.58$]} & \multicolumn{2}{c}{$-3.6$ [$-4.2$, $-3.0$]} & \multicolumn{2}{c}{$-4.8$ [$-6.8$, $-3.0$]} \\
Batch then reversal $-$ reversal & \multicolumn{2}{c}{$-0.37$ [$-0.49$, $-0.25$]} & \multicolumn{2}{c}{$-3.7$ [$-4.9$, $-2.6$]} & \multicolumn{2}{c}{$-0.1$ [$-1.0$, 0.9]} \\
PriDe then batch $-$ PriDe & \multicolumn{2}{c}{$-0.17$ [$-0.21$, $-0.14$]} & \multicolumn{2}{c}{$-1.9$ [$-2.3$, $-1.6$]} & \multicolumn{2}{c}{0.4 [$-0.4$, 1.4]} \\
PriDe then batch $-$ batch & \multicolumn{2}{c}{$-0.03$ [$-0.12$, 0.07]} & \multicolumn{2}{c}{$-0.5$ [$-1.3$, 0.2]} & \multicolumn{2}{c}{0.4 [$-0.7$, 1.5]} \\
\bottomrule
\end{tabular}}
\caption{\textbf{PriDe and the reversal product narrow placement range by similar amounts, but the product leaves fewer flips and rank reversals; a correction followed by the product lowers both factors and narrows the placement range most.} Cells with at least three options, on shared held-out items. Top: mean log-factor changes, placement range and rank reversals as levels, and changes in accuracy and stable-wrong share. Bottom: cellwise paired differences, the reversal product minus each correction and each composition minus each of its components.}
\label{tab:composition}
\end{table*}

The two operations also leave different harms of option order. The \emph{placement range}, how far accuracy
moves when every correct option is placed at one letter, equals the flip rate times the range of $h$
(\cref{eq:placement}). \emph{Rank reversals} are the share of checkpoint pairs on a benchmark whose accuracy
order changes across the cyclic orderings.

\paragraph{The operations leave different harms.}
Prior correction and the reversal product narrow the placement range by similar amounts, yet on the same
cells the product leaves fewer rank reversals than either correction (\cref{tab:composition}).
A pair can reverse only if correctness changes with the order, and a checkpoint's accuracies under two
orderings differ by at most its flip rate on the same items (\cref{sec:method_stats}); correction leaves the flip rate almost
unchanged and does not lower rank reversals. It narrows the placement range instead through the range of
$h$, which follows the letter of the correct option rather than of the chosen answers; a lower skew alone
need not narrow it. Lowering one factor need not reduce the harm the other leaves, so a method
should be judged on both factors and both harms.

\paragraph{Composition lowers both factors.}
A prior correction followed by the reversal product keeps the product's fall in the flip rate, lowers the skew
more than the product alone, and narrows the placement range the most (\cref{tab:composition}). Against
each component on the same cells, it further lowers the other's factor, but rank reversals stay at
the product's level. \citet{zheng2024large} already pair PriDe with partial permutation averaging.
Two corrections in a row add little: PriDe followed by batch calibration, with the second prior re-estimated
on PriDe's output, does not clearly lower the pooled SD beyond batch calibration and barely moves the flip rate.
\looseness=-1

\paragraph{Stable is not the same as correct.}
The reversal product lowers the flip rate partly by settling items on a wrong option: the stable-wrong share
rises by more than accuracy does, and both compositions with the product do the same (\cref{tab:composition}).
Averaged across cells, for nine of ten newly stable-wrong items the option they settle on was already the model's unique most
frequent choice over the item's cyclic and reversed queries. Fewer order-dependent answers are not more right
answers, so accuracy belongs beside both factors.
\looseness=-1

\section{Conclusion}
\label{sec:conclusion}

Under balanced orderings, the pooled SD is exactly the product of two behaviours: the flip rate, how
often the order changes an answer, and the positional skew, how unevenly the changed answers land; balanced RStd
shares only the flip rate unless $K=2$. Mitigation methods move these behaviours differently and leave
different harms. Prior correction mainly lowers the skew and aggregating an ordering with its reversal mainly the flip
rate; the two narrow the placement range by similar amounts, yet on the same cells the aggregation leaves
fewer rank reversals and lowers flips partly by settling items on a wrong answer. A correction followed by the aggregation lowers
both factors. A lower factor alone is therefore not evidence of a better model. Flips also track accuracy and output
uncertainty, and the skew tracks the model's option-symbol bias. Both factors come from one sweep over the
orderings without gold labels; we recommend reporting them with accuracy and judging a mitigation method by
all three and by the harm of interest.
\looseness=-1
\label{sec:body_end}

\section*{Limitations}
\label{sec:limitations}
\label{sec:limitations_score}

Our analysis characterises the observable effects of option order, but not the model-internal mechanisms that produce them. The prior corrections and controlled interventions act on output scores, and the observed associations with accuracy and output uncertainty do not establish causal explanations. Moreover, although the pooling identity is exact under balanced orderings, reducing either factor does not guarantee better predictions: consistently wrong answers can remain stable, and changes in positional skew need not translate into improved placement robustness or ranking stability. Our experiments also focus on deterministic readouts of mostly English multiple-choice benchmarks and specific mitigation configurations. Future work should investigate the internal mechanisms underlying the two factors through targeted interventions and theoretically characterise when reductions in each factor lead to improvements in correctness and evaluation stability.

\bibliography{custom,added}

\clearpage
\appendix
\raggedbottom
\etocdepthtag.toc{mtappendix}
\etocsettagdepth{mtbody}{none}
\etocsettagdepth{mtappendix}{subsection}
\etocsettocstyle{\section*{Appendix contents}}{}
\tableofcontents
\let\obtAppendixSection\section
\renewcommand{\section}{\FloatBarrier\obtAppendixSection}
\setcounter{topnumber}{4}\setcounter{bottomnumber}{2}\setcounter{totalnumber}{6}\setcounter{dbltopnumber}{3}
\renewcommand{\topfraction}{0.9}\renewcommand{\bottomfraction}{0.6}\renewcommand{\textfraction}{0.1}
\renewcommand{\floatpagefraction}{0.75}\renewcommand{\dbltopfraction}{0.9}\renewcommand{\dblfloatpagefraction}{0.75}
\section{Experimental protocols and cohorts}
\label{app:pools}

\subsection{Cohorts, models and benchmarks}
\label{app:pools_rule}

An item is usable only if it was answered under every ordering. A cell enters a filtered pool if it
holds at least $90\%$ of its benchmark's modal usable item count under the first-token sweep.
Cells that fail the filter remain in the unfiltered pool, which is every cell run. Each readout has its
own pool. The first-token pool is every cell run (\cref{tab:benchmarks}). Chain of thought was run with the plain
prompt on every benchmark but MMLU-Pro, five-shot prompting on every benchmark but TruthfulQA-MC1, BBH-MC and
MMLU-Pro, and direct answers on ARC, MMLU, OpenBookQA and LogiQA, with HellaSwag on two checkpoints. The filter
removes no first-token or five-shot cell. The held-out benchmark control uses the six benchmarks run on every
checkpoint, and the all-ordering analyses use eight checkpoints on ARC-Easy, ARC-Challenge, HellaSwag and MMLU,
six of them for the item partition. Analyses on part of the first-token pool name it: the reversal product, its
compositions and the canonical-margin controls use the cells with at least three options, since reversing a
two-option ordering gives the other cyclic ordering, and the common-readout comparison uses the cells run under the
first-token, chain-of-thought and direct-answer readouts, before the filter, with five-shot on the part of them it covers.

\paragraph{Models.}
\label{app:pools_models}
\Cref{tab:models} lists the roster.
The DeepSeek-R1-Distill checkpoints \citep{guo2025deepseek_r1} are fine-tuned on DeepSeek-R1
traces from base backbones, three of which are in the roster, and stay unpaired with them, because
distillation from a reasoning teacher varies a different axis from the post-training the rest of the
roster varies.
The scale trends within a family in \cref{tab:reversals} group checkpoints by model name, which puts
DeepSeek-R1-Distill-Llama-8B with the Llama checkpoints and the Qwen-based distilled checkpoints in a group
of their own.

\begin{table*}[t]
\centering
\small
\setlength{\tabcolsep}{3pt}
\begin{tabular}{@{}lr@{\hspace{2em}}lr@{}}
\toprule
\textbf{Model} & \textbf{Params (B)} & \textbf{Model} & \textbf{Params (B)} \\
\midrule
\multicolumn{2}{@{}l}{\textit{Llama-3.x}} & \multicolumn{2}{@{}l}{\textit{Qwen3}} \\
Llama-3.2-1B & 1 & Qwen3-0.6B & 0.6 \\
Llama-3.2-1B-Instruct & 1 & Qwen3-1.7B & 1.7 \\
Llama-3.2-3B & 3 & Qwen3-4B & 4 \\
Llama-3.2-3B-Instruct & 3 & Qwen3-8B & 8 \\
Llama-3.1-8B & 8 & \multicolumn{2}{@{}l}{\textit{Gemma-2}} \\
Llama-3.1-8B-Instruct & 8 & gemma-2-2b & 2 \\
Llama-3.1-70B & 70 & gemma-2-2b-it & 2 \\
Llama-3.1-70B-Instruct & 70 & gemma-2-9b & 9 \\
\multicolumn{2}{@{}l}{\textit{Qwen2.5}} & gemma-2-9b-it & 9 \\
Qwen2.5-0.5B & 0.5 & gemma-2-27b & 27 \\
Qwen2.5-0.5B-Instruct & 0.5 & gemma-2-27b-it & 27 \\
Qwen2.5-1.5B & 1.5 & \multicolumn{2}{@{}l}{\textit{DeepSeek-R1-Distill}} \\
Qwen2.5-1.5B-Instruct & 1.5 & DeepSeek-R1-Distill-Qwen-1.5B & 1.5 \\
Qwen2.5-3B & 3 & DeepSeek-R1-Distill-Qwen-7B & 7 \\
Qwen2.5-3B-Instruct & 3 & DeepSeek-R1-Distill-Llama-8B & 8 \\
Qwen2.5-7B & 7 & DeepSeek-R1-Distill-Qwen-14B & 14 \\
Qwen2.5-7B-Instruct & 7 & DeepSeek-R1-Distill-Qwen-32B & 32 \\
Qwen2.5-14B & 14 & \multicolumn{2}{@{}l}{\textit{Mistral}} \\
Qwen2.5-14B-Instruct & 14 & Mistral-Nemo-Instruct-2407 & 12 \\
Qwen2.5-32B & 32 & Mistral-Small-24B-Instruct-2501 & 24 \\
Qwen2.5-32B-Instruct & 32 & \multicolumn{2}{@{}l}{\textit{Phi-3}} \\
Qwen2.5-72B-Instruct & 72 & Phi-3-mini-4k-instruct & 3.8 \\
 &  & Phi-3.5-mini-instruct & 3.8 \\
 &  & Phi-3-medium-4k-instruct & 14 \\
\bottomrule
\end{tabular}
\caption{\textbf{The checkpoints span seven families.} Parameter counts are the released sizes. The Qwen3 releases carry no \texttt{-Base} suffix and are post-trained hybrid-thinking checkpoints.}
\label{tab:models}
\end{table*}

\begin{wraptable}{R}{0.48\textwidth}
\setlength{\columnwidth}{\linewidth}
\centering
\small
\begin{tabular}{lrrr}
\toprule
\textbf{Benchmark} & \textbf{$K$} & \textbf{Items} & \textbf{Models} \\
\midrule
PIQA & 2 & 1,838 & 41 \\
WinoGrande & 2 & 1,267 & 41 \\
ARC-Challenge & 4 & 1,165 & 41 \\
ARC-Easy & 4 & 2,365 & 41 \\
BBH-MC & 4 & 732 & 36 \\
HellaSwag & 4 & 10,042 & 41 \\
LogiQA & 4 & 651 & 36 \\
MMLU & 4 & 535 & 41 \\
OpenBookQA & 4 & 500 & 36 \\
RACE-middle & 4 & 1,436 & 36 \\
TruthfulQA-MC1 & 4 & 219 & 36 \\
MMLU-Pro & 10 & 9,981 & 20 \\
\bottomrule
\end{tabular}
\caption{\textbf{Every statistic is computed on the items of these benchmarks that are usable under all $K$ orderings.} ``Items'' is the modal usable item count under the first-token sweep; ``Models'' counts checkpoints with a first-token cell. MMLU is a five-subject subset.}
\label{tab:benchmarks}
\end{wraptable}

\paragraph{Benchmarks.}
\label{app:pools_benchmarks}
The twelve benchmarks are PIQA \citep{bisk2020piqa}, WinoGrande \citep{sakaguchi2021winogrande},
ARC-Easy and ARC-Challenge \citep{clark2018think}, HellaSwag \citep{zellers2019hellaswag}, a
five-subject subset of MMLU \citep{hendrycks2021measuring}, OpenBookQA
\citep{mihaylov2018openbookqa}, LogiQA \citep{liu2020logiqa}, RACE-middle \citep{lai2017race},
TruthfulQA-MC1 \citep{lin2022truthfulqa}, BBH-MC \citep{suzgun2023challenging} and MMLU-Pro
\citep{wang2024mmlupro}.
Evaluation uses the test split where one is public and the validation split otherwise (PIQA,
WinoGrande, HellaSwag and TruthfulQA-MC1), and an item is kept only if its option list has exactly $K$ entries, so
TruthfulQA-MC1 and BBH-MC keep their four-choice items, most of BBH-MC's from temporal sequences, ruin names and
movie recommendation. A BBH-MC question shows only the presented options, without the source's lettered list.

MMLU is a $535$-item subset of five subjects: abstract algebra, anatomy, college chemistry, computer
security and global facts.

\subsection{Prompt}
\label{app:pools_prompt}

One fixed question--options body is used for every model, benchmark and ordering.

\begin{quote}\ttfamily\small
\noindent Question: \{question\}\\
A. \{option 0\}\\
B. \{option 1\}\\
\ldots\\
Answer:
\end{quote}

Letters name positions, so a rotation moves contents between letters. The open
checkpoints get no system prompt. For LogiQA and RACE-middle the passage precedes the question and the
options, the order under which the option tokens can attend to the passage \citep{ok2026lost}.

\paragraph{A flipping item.}
Phi-3-mini-4k-instruct saw this ARC-Easy test item in the canonical order as
\begin{quote}\ttfamily\small
\noindent Question: Which type of energy does a radiator give off ?\\
A. heat\\
B. light\\
C. solar\\
D. electrical\\
Answer:
\end{quote}
with the gold option at \opt{A}. Under $\pi_1$, $\pi_2$ and $\pi_3$ (\cref{eq:shift}) the gold option moves to \opt{B}, \opt{C}
and \opt{D}, and position \opt{A} holds \texttt{electrical}, \texttt{solar} and \texttt{light} in turn. The model answers \opt{A}
under all four orderings, so its chosen content changes and the item flips: it is correct in the canonical
order and wrong under the other three. Under those three orderings the gold option keeps 0.44, 0.44 and 0.32
of the option probability, while position \opt{A} takes 0.56, 0.56 and 0.68.

\subsection{Readouts}
\label{app:pools_readouts}

\paragraph{First-token probability.}
One forward pass per ordering, softmaxed over the $K$ option letters (\cref{sec:method_stats}), with
ties broken by letter index.
Probability analyses on the cyclic orderings use this softmax on every first-token cell or a stated
subset, such as the cells with at least three options for reversal and composition. The all-ordering
operation check uses the cells whose option probabilities are stored unrounded under every ordering
(\cref{app:orderings}).

\paragraph{Five-shot prompting.}
The first-token readout after five exemplars from the benchmark's development or training split, held
fixed across models and orderings.

\paragraph{Chain of thought.}
The final \texttt{Answer:} cue is replaced, after a blank line, by \texttt{"Let's think step by step, then provide your final answer as
\textbackslash"The answer is [X]\textbackslash" where X is A, B, C, or D."}, with \texttt{"A or B"} for two options. The answer is the latest valid explicit answer in the trace or,
when there is none, the last valid standalone letter. Only the first $\min(K,4)$ letters are valid,
and a trace with no valid answer leaves the item unusable for the cell. Dropping the cells in which more than half
of the extracted answers come from the standalone-letter fallback leaves the skew's share of the variance of the log
pooled SD compatible with the first-token readout's and its correlation with accuracy negative.
Chain-of-thought pools use the plain prompt without a checkpoint-specific chat template. No generative readout includes MMLU-Pro.

\paragraph{Direct answer.}
The base template. The choice is the letter the generated text opens with; a text that opens with no
valid letter is scored with the chain-of-thought extraction rule.

\subsection{Decoding}
\label{app:pools_decoding}

\begin{assumption}[Greedy readout of the open-weight checkpoints]
\label{asm:greedy}
For the open-weight checkpoints, the first-token and five-shot readouts take an option-probability argmax, and the generative readouts decode greedily
and extract answers deterministically. We assume that repeating a readout under the same ordering gives
the same answer, so that a flip between orderings reflects the reordering.
\end{assumption}
Sampled readouts mix order sensitivity with draw noise, so we do not use them. The closed models of
\cref{app:readouts_codex} can answer a repeated prompt differently, so they fall outside the assumption and are
analysed separately.
\looseness=-1

\paragraph{Compute.}
All experiments are inference only. They ran on NVIDIA GPUs with 24 to 141\,GB of memory (RTX 3090, RTX 4090, RTX A5000,
RTX A6000, RTX 6000 Ada, L40S, A100 and H200), one GPU per run and two for the largest checkpoints, with PyTorch 2.6,
Hugging Face Transformers 4.57 (4.45 for the Phi-3 checkpoints, whose remote code requires it) and vLLM 0.8.5 for
generation.

\subsection{Intervals}
\label{app:pools_intervals}
\label{app:matched_blocks}

The factor and mitigation analyses of the main text (the canonical margin, the held-out prior and its
permuted control, the operations and their compositions) use checkpoint-block bootstrap draws and nominal 95\%
percentile intervals; their contrasts are paired within each draw and are not adjusted for multiple
comparisons.
The remaining analyses keep their own resampling units, named with each table. Correlations across cells
resample cells, paired so that all statistics compared share each draw. Correlations under a held-out design,
with accuracy scored on items, benchmarks or orderings the statistic does not read, resample whole
checkpoints instead, and leave-one-out refits are reported as the range over refits. Control contrasts use
checkpoint-block intervals, which resample whole checkpoints with all their cells. Most intervals are central
95\% percentile intervals; the variance shares of the log pooled SD and the scaled-flip-rate correlations use
the 1.25th and 98.75th percentiles, central 97.5\% intervals that are wider to allow for the several quantities
each of these analyses reports at once, and gap intervals marked simultaneous hold jointly over the gaps of
one analysis.
The controls resample checkpoints while holding the retained items and benchmarks fixed, so their intervals do not account for item sampling.
A change between two measurements, of two cells or of one cell
before and after PriDe, counts as resolved when its item-bootstrap interval excludes zero.

\setcounter{topnumber}{4}
\setcounter{dbltopnumber}{3}
\renewcommand{\topfraction}{0.9}
\renewcommand{\dbltopfraction}{0.9}
\renewcommand{\textfraction}{0.07}
\renewcommand{\floatpagefraction}{0.75}
\renewcommand{\dblfloatpagefraction}{0.75}
\section{Formal statements and the two factors}
\label{app:formal}

This appendix proves the statements the method relies on and collects the cell-level evidence for the two factors of the pooled selection-rate SD. Notation follows \cref{sec:method_stats}. The benchmark has $N \ge 1$ items, each with the same number $K \ge 2$ of options and one correct content index. For every item and ordering, a single chosen content is fixed, with ties broken by a fixed rule. Comparisons between ordering sets use the same items and the same fixed choices, so they differ only in which orderings are included. For a nonempty set $S$ of orderings we write $\FL(S)$ for the flip rate taken over $S$, so the cyclic protocol reports $\FL = \FL(S_{\mathrm{cyc}})$. Positions, content indices and summation indices are $0$-based. The method rests on one property of the cyclic orderings, which we state first.

\begin{lemma}[Balance]
\label{lem:balance}
Fix a content index $j$. Under \cref{eq:shift}, the map $s \mapsto \pi_s(j) = (j+s) \bmod K$ is a bijection of $\{0,\dots,K-1\}$. Hence over the $K$ cyclic orderings, content $j$ occupies each position exactly once, and for every position $r$ there is exactly one $s$ with $\pi_s(j) = r$.
\end{lemma}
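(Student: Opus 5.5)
The plan is to prove the bijection claim directly from \cref{eq:shift} by exhibiting an explicit inverse, and then read off both consequences. Fix $j \in \{0,\dots,K-1\}$ and write $\phi_j(s) = (j+s) \bmod K$ for $s \in \{0,\dots,K-1\}$. Reduction modulo $K$ lands in $\{0,\dots,K-1\}$, so $\phi_j$ maps this set into itself. For the inverse I would take $\psi_j(r) = (r - j) \bmod K$. The checks $\phi_j(\psi_j(r)) \equiv r - j + j \equiv r$ and $\psi_j(\phi_j(s)) \equiv s$ modulo $K$ both hold. Since both sides of each check lie in $\{0,\dots,K-1\}$, congruence there forces equality.

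An equivalent route is to show only injectivity and then count. Suppose $(j+s) \bmod K = (j+s') \bmod K$. Then $K$ divides $s - s'$, and $|s-s'| < K$ forces $s = s'$. An injective map of a finite set into itself is a bijection. Either argument is routine, and I do not expect a real obstacle.

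The one point to state carefully is the translation to the modelling language. ``Content $j$ occupies position $r$ under ordering $\pi_s$'' means exactly $\pi_s(j) = r$ by \cref{eq:shift}. Surjectivity of $\phi_j$ then gives, for every $r$, at least one $s$ with $\pi_s(j) = r$, and injectivity gives at most one. So there is exactly one such $s$, namely $s = (r-j) \bmod K$.

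Counting over the $K$ cyclic orderings, content $j$ therefore occupies each of the $K$ positions exactly once. This is the balance property used later in the pooling identity \cref{eq:pooling_mixture}: a stable item's $K$ answers land once on each position, so $f_q = \mathrm{unif}$.
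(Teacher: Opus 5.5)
Your proof is correct and is essentially the paper's argument: the paper also notes that adding $j$ modulo $K$ is invertible with inverse $r \mapsto (r-j) \bmod K$. Your injectivity-and-counting variant and your explicit translation from $\pi_s(j) = r$ to ``content $j$ at position $r$'' are just fuller write-ups of that same argument.
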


\begin{proof}
    For fixed $j$, adding $j$ modulo $K$ is invertible, with inverse $r \mapsto (r - j) \bmod K$. 
\end{proof}

\subsection{The identity under any ordering design}
\label{app:any_design}

We extend the pooling decomposition and identity in \Cref{eq:pooling_mixture,eq:rstd_factor} to arbitrary ordering designs. A design is a multiset $S$ of $m \ge 1$ orderings of the $K$ options. For content index $j$, let
\begin{equation}
    u_j(r) = \frac{1}{m}\, \#\{\pi \in S : \pi(j)=r\},
\end{equation}
counting repeated orderings with multiplicity, and define $M(S)=\max_j\operatorname{std}(u_j)$, which depends only on the design. Here $\operatorname{std}$ denotes the population standard deviation over a vector's $K$ entries.

Let $f_q^S$ denote item $q$'s chosen-position histogram over $S$, and let $f = N^{-1}\sum_q f_q^S$. An item whose chosen content $z_q(\pi)$ is constant on $S$ has $f_q^S = u_{z_q}$. Here $\FL = \FL(S)$ is the share of items whose chosen content changes across $S$, and $g$ is the mean histogram of these flipping items.

To account for unbalanced designs, define the stable-item deviation from uniform as
\begin{equation}
    b_S = \frac{1}{N} \sum_{q:\,\mathrm{stable}} \left(u_{z_q}-\mathrm{unif}\right).
\end{equation}
When $\FL=0$, we set $g=\mathrm{unif}$. When $\FL=1$, the sum defining $b_S$ is empty and $b_S=0$.

\begin{proposition}[Pooling under any design]
\label{prop:any_design}
For every design and every prediction pattern:
\begin{enumerate}[nosep]
    \item \textbf{Pooling decomposition.} The pooled histogram satisfies
    \begin{equation}
        f = (1-\FL)\mathrm{unif} + \FL g + b_S.
        \label{eq:any_design_decomp}
    \end{equation}

    \item \textbf{Corrected identity.} The design-corrected pooled SD $\tilde s(f) = \operatorname{std}(f-b_S)$ satisfies
    \begin{equation}
        \tilde s(f) = \FL\cdot\operatorname{std}(g),
    \end{equation}
    and can be computed without gold labels.

    \item \textbf{Deviation bound.} The deviation from the pooling identity
    is bounded by
    \begin{equation}
    \begin{aligned}
        &\bigl|\operatorname{std}(f) -\FL\cdot\operatorname{std}(g)\bigr|\\ &\quad\le \operatorname{std}(b_S)\\ &\quad\le (1-\FL)M(S).
    \end{aligned}
    \end{equation}
\end{enumerate}

Moreover, the original pooling identity $\operatorname{std}(f)=\FL\cdot\operatorname{std}(g)$ holds for every prediction pattern if and only if $S$ is position-balanced, i.e., $u_j=\mathrm{unif}$ for every $j$.
\end{proposition}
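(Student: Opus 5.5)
The plan is to prove the four parts in order, each building on the previous one.

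\textbf{Decomposition.} Split the average $f = N^{-1}\sum_q f_q^S$ into stable and flipping items. For flipping items, $\sum_{q\in\mathcal F} f_q^S = N\,\FL\, g$ by the definition of $g$. When $\FL=0$ this term vanishes, so the convention $g=\mathrm{unif}$ is harmless. For a stable item, $f_q^S = u_{z_q}$. Writing $u_{z_q} = \mathrm{unif} + (u_{z_q}-\mathrm{unif})$ and summing over the $N(1-\FL)$ stable items gives $(1-\FL)\,\mathrm{unif} + b_S$. This yields \cref{eq:any_design_decomp}.

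\textbf{Corrected identity.} Subtracting $b_S$ and then $\mathrm{unif}$ gives $f - b_S - \mathrm{unif} = \FL\,(g-\mathrm{unif})$. Every histogram here sums to one, so each has mean $1/K$. Hence $\operatorname{std}(v) = \|v-\mathrm{unif}\|_2/\sqrt K$ for each of $f$, $f-b_S$ and $g$. Note that $b_S$ has entries summing to zero, so $f-b_S$ still has mean $1/K$. Homogeneity of the norm then gives $\tilde s(f) = \FL\operatorname{std}(g)$. For label-freeness, observe that $b_S$ depends only on the design and on each stable item's chosen content, never on $c_q$.

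\textbf{Deviation bound.} By the reverse triangle inequality for the seminorm $\operatorname{std}$, which is a scaled Euclidean norm on centred vectors,
\[
|\operatorname{std}(f)-\operatorname{std}(f-b_S)| \le \operatorname{std}(b_S).
\]
Then $b_S$ is an average over all $N$ items, with zero contribution from flipping items, of the centred vectors $u_{z_q}-\mathrm{unif}$. Convexity of the norm gives
\[
\operatorname{std}(b_S) \le N^{-1}\sum_{\text{stable}} \operatorname{std}(u_{z_q}) \le (1-\FL)\,M(S).
\]

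\textbf{Characterisation.} If $S$ is position-balanced, then $b_S=0$ and the identity follows from part 2. For the converse, I expect the only real obstacle: choosing a prediction pattern that exposes imbalance. Suppose $u_j\ne\mathrm{unif}$ for some $j$, and take every item stable on content $j$. Then $\FL=0$ and the right side is $0$. Meanwhile $f = u_j$ has positive SD, so the identity fails. No other case is needed, since one counterexample pattern suffices.
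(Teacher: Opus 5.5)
Your proof is correct and follows the paper's own argument step for step. You use the same stable/flipping split for the decomposition, and shifting by a constant vector for the corrected identity (which you phrase through the common mean $1/K$). For the bound you likewise apply the reverse triangle inequality, then the triangle inequality over the stable items; for the converse you use the same counterexample in which every item always chooses content $j$.
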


\begin{proof}
For the pooling decomposition, averaging the chosen-position histograms over stable and flipping items gives
\begin{equation}
    \begin{aligned}
    f &= \frac{1}{N}\sum_{q:\,\mathrm{stable}}u_{z_q} +\FL g\\
      &= (1-\FL)\mathrm{unif}+\FL g+b_S.
    \end{aligned}
\end{equation}

For the corrected identity, subtracting $b_S$ gives $f-b_S=(1-\FL)\mathrm{unif}+\FL g$. Adding a constant vector does not change the standard deviation and $\FL\ge0$, so $\operatorname{std}(f-b_S) =\FL\cdot\operatorname{std}(g)$. The correction uses only the ordering design and the chosen contents.

For the deviation bound, the standard deviation $\operatorname{std}(x) =\lVert x-\bar x\mathbf{1}\rVert_2/\sqrt{K}$, with $\bar x$ the mean entry of $x$ and $\mathbf{1}$ the all-ones vector, is a seminorm. Applying the reverse triangle inequality to \cref{eq:any_design_decomp} gives
\begin{equation}
    \bigl|\operatorname{std}(f) -\FL\cdot\operatorname{std}(g)\bigr| \le \operatorname{std}(b_S).
\end{equation}
By the triangle inequality and the definition of $M(S)$,
\begin{equation}
\begin{aligned} 
    \operatorname{std}(b_S) &\le \frac{1}{N} \sum_{q:\,\mathrm{stable}} \operatorname{std}(u_{z_q})\\ &\le (1-\FL)M(S).
\end{aligned}
\end{equation}

For the equivalence, a position-balanced design has $u_j=\mathrm{unif}$ for every $j$, so $b_S=0$ and \cref{eq:any_design_decomp} reduces to \cref{eq:pooling_mixture}. Conversely, if $u_j\ne\mathrm{unif}$ for some $j$, consider the prediction pattern in which every item always chooses $j$. Then $\FL=0$ and $f=u_j$, giving $\operatorname{std}(f)>0$ and violating the original pooling identity.
\looseness=-1
\end{proof}

When each item has its own nonempty design $S_q$, with $u_{q,j}$ computed from $S_q$ as above, the same statements hold with $u_{q,j}$ in place of $u_j$, $M=\max_{q,j}\operatorname{std}(u_{q,j})$ in place of $M(S)$ and $b$ in place of $b_S$, where
\begin{equation}
    b = \frac{1}{N} \sum_{q:\,\mathrm{stable}} \left(u_{q,z_q}-\mathrm{unif}\right).
\end{equation}

Item histograms are equally weighted even if design sizes differ. For the original pooling identity to hold for every prediction pattern, each $S_q$ must be position-balanced. When every item is stable, the identity requires $f=\mathrm{unif}$. Varying only one item's chosen content implies that $u_{q,j}$ is independent of $j$. Since $K^{-1}\sum_j u_{q,j}=\mathrm{unif}$, each $S_q$ must be balanced. Sufficiency follows from \cref{eq:any_design_decomp}.

The corrected residual is at numerical precision and the deviation bound holds in the cells run under all orderings and in every draw of each random-subset design (\cref{tab:design_correction}).

\subsection{Gold-label statistics and two-order disagreement}
\label{app:formal_gold}

\begin{proposition}[Placement accuracy and balanced RStd]
\label{prop:placement}
Under the cyclic orderings, for each letter $j$ let $s_q(j) = (j - c_q) \bmod K$ be the unique ordering (\cref{lem:balance}) that
shows item $q$'s correct option at letter $j$, and let $A_j = \frac{1}{N}\sum_q a_q(\pi_{s_q(j)})$. With
$\mathrm{SC}$ the share of items correct under every cyclic ordering, define
$h_j = \frac{1}{N\FL}\sum_{q \text{ flipping}} a_q(\pi_{s_q(j)})$
when $\FL>0$, and set $h_j=0$ for every $j$ when $\FL=0$. Then
\begin{equation}
  A_j = \mathrm{SC} + \FL \cdot h_j \quad\text{for every } j,
  \label{eq:placement_proof}
\end{equation}
so $\operatorname{std}_j(A_j) = \FL \cdot \operatorname{std}(h)$ and
$\max_j A_j - \min_j A_j = \FL\,(\max_j h_j - \min_j h_j)$.
\end{proposition}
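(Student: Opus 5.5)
The plan is to partition the items into stable-correct, stable-wrong and flipping, and split the sum defining $A_j$ along this partition. First, by \cref{lem:balance} applied to $c_q$, the map $s \mapsto (c_q+s) \bmod K$ is a bijection. So for each letter $j$ there is exactly one cyclic ordering showing the correct option at $j$, namely $s_q(j) = (j-c_q) \bmod K$. This makes $A_j$ well defined, with one term per item.

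Next I evaluate each term by class. A stable item has constant $z_q$ on $S_{\mathrm{cyc}}$, so $a_q(\pi)$ is constant in $\pi$. Hence $a_q(\pi_{s_q(j)}) = 1$ for stable-correct items and $0$ for stable-wrong items, whatever $j$ is. Summing gives
\[
A_j = \tfrac{1}{N}\,\#\{\text{stable-correct}\} + \tfrac{1}{N}\sum_{q \text{ flipping}} a_q(\pi_{s_q(j)}) .
\]
The first term is $\mathrm{SC}$. One point needs care here: $\mathrm{SC}$ is defined as the share of items correct under every cyclic ordering, and this set must coincide with the stable-correct items. A flipping item cannot be correct under every ordering, since it would then choose $c_q$ throughout and so be stable. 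The second term equals $\FL\cdot h_j$ by the definition of $h_j$ when $\FL>0$. When $\FL=0$ the flipping sum is empty and $h_j=0$, so \cref{eq:placement_proof} still holds.

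The consequences are then formal. Since $\mathrm{SC}$ does not depend on $j$, adding it as a constant vector leaves the population SD unchanged and cancels in differences. Using $\FL\ge 0$ to pull the scalar out of the SD and out of the max and min gives
\[
\operatorname{std}_j(A_j)=\FL\operatorname{std}(h), \qquad \max_j A_j-\min_j A_j=\FL(\max_j h_j-\min_j h_j).
\]

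I expect no real obstacle. The only delicate steps are the identification of $\mathrm{SC}$ with the stable-correct share, and the degenerate case $\FL=0$, which the convention $h=0$ handles.
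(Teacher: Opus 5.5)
Your proposal is correct and follows essentially the same route as the paper's proof. Both partition the items into stable-correct, stable-wrong and flipping, use the facts that a stable item has constant correctness and that an item correct under every cyclic ordering cannot flip, handle $\FL=0$ through the convention $h=0$, and conclude from the translation invariance and positive homogeneity of the population SD and the range.
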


\begin{proof}
With one correct content per item, an item correct under every cyclic ordering always chooses that content
and therefore does not flip. A stable item has constant correctness, since its chosen content does not change across orderings. 
So the stable-correct items contribute $\mathrm{SC}$ to every $A_j$ and the stable-wrong items contribute nothing.
If $\FL>0$, the remaining contribution is $\frac{1}{N}\sum_{q \text{ flipping}}a_q(\pi_{s_q(j)})=\FL\,h_j$.
If $\FL=0$, there are no remaining items and $A_j=\mathrm{SC}$ for every $j$, consistent with $h=0$.
Population SD and range are translation invariant and positively homogeneous, giving both identities.
\end{proof}

For a balanced multiset $S$ of $m$ orderings, replace $a_q(\pi_{s_q(j)})$ by
$a_{q,j}^S=\frac{K}{m}\sum_{\pi\in S:\,\pi(c_q)=j}a_q(\pi)$ and define $\FL$ and $\mathrm{SC}$ on $S$. Every item
is shown exactly $m/K$ times with its correct option at each letter, so the same proof gives all three identities.

$A_j$ is the classwise recall of letter $j$ when every item is evaluated once with its correct option at each
letter, so $\operatorname{std}_j(A_j)$ is RStd \citep{zheng2024large} with the correct option balanced over the letters; RStd
computed on the canonical ordering alone does not split this way. $\max_j A_j - \min_j A_j$ is the cyclic form
of the answer-moving attack. Unlike $g$, computing $h$ requires gold labels. In general neither vector determines the other, although
$h=g$ under the cyclic protocol when $K=2$ and $\FL>0$. Across cells, the dispersions
$\operatorname{std}(h)$ and $\operatorname{std}(g)$ have a Spearman correlation of $0.92$.

\begin{proposition}[Two-order disagreement]
\label{prop:two_order}
For orderings $\pi, \pi' \in S$, the share of items with $z_q(\pi) \neq z_q(\pi')$ is at most $\FL(S)$.
\end{proposition}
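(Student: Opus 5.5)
The proof is a set inclusion followed by counting. Fix $\pi, \pi' \in S$ and let $D = \{q : z_q(\pi) \neq z_q(\pi')\}$ be the items on which the two orderings disagree. I will show that every $q \in D$ lies in the set of items whose chosen content is non-constant on $S$. That set has size $N\,\FL(S)$ by the definition of the flip rate over $S$ (\cref{eq:fl}, with $S$ in place of $S_{\mathrm{cyc}}$). Dividing $|D| \le N\,\FL(S)$ by $N$ then gives the claim.

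The inclusion is immediate. If $q \in D$, then $z_q$ takes at least two distinct values on $\{\pi, \pi'\} \subseteq S$. Hence $z_q$ is not constant on $S$, and $q$ is a flipping item over $S$. The standing convention of this appendix matters here. Each item and ordering has a single chosen content, fixed by the tie-breaking rule, and the same fixed choices are used whether we compare two orderings or the whole set. So the $z_q(\pi)$ entering the disagreement count is the same value that enters $\FL(S)$.

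I expect no real obstacle. The only care needed is with degenerate cases. If $\pi = \pi'$, then $D$ is empty and the bound holds trivially. If $S$ is a multiset, membership of $\pi, \pi'$ still means they are among the evaluated orderings, and multiplicity does not affect whether $z_q$ is constant. The containment can be strict: an item can agree on $\pi$ and $\pi'$ yet flip elsewhere in $S$. This explains why \cref{tab:metric_map} records the two-order disagreement as a bound rather than an equality. Applied to $S = S_{\mathrm{cyc}}$ when both orders are cyclic, the statement gives the bound listed there for the fluctuation rate of \citet{wei-etal-2024-unveiling}.
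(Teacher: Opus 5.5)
Your proof is correct and is essentially the paper's argument. The paper's proof is the single observation that an item answered differently under $\pi$ and $\pi'$ has a non-constant choice on $S$, so the disagreeing items are a subset of the flipping items and their share is at most $\FL(S)$; your remarks on fixed tie-breaking, multisets and strict containment are accurate details the paper leaves implicit or discusses after the proof.
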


\begin{proof}
An item answered differently under $\pi$ and $\pi'$ has a non-constant choice on $S$.
\end{proof}

The bound requires both orderings to belong to $S$. For $K\ge3$, reversal is not a cyclic ordering, so an item
can keep its choice under every cyclic ordering and change it under reversal. If $S=\{\pi,\pi'\}$ with two
distinct orderings, their disagreement rate equals $\FL(S)$; this includes the cyclic protocol at $K=2$. When
$S$ contains additional distinct orderings, a fixed pair's disagreement rate and $\FL(S)$ need not determine
one another: an item can change only outside the pair, and a flipping item can either agree or disagree within
the pair. Neither scalar reports the destinations of the changing answers.

\subsection{The divergence bound}
\label{app:formal_bound}

On a balanced design $f - u = \FL\,(g - u)$ with $u$ uniform, so every deviation from uniform that is
positively homogeneous of degree one in $f - u$ factors like \cref{eq:rstd_factor}: total variation and the largest
deviation from uniform each equal $\FL$ times their value for $g$, and the chi-square divergence,
homogeneous of degree two, equals $\FL^2\,\chi^2(g \,\|\, u)$.
The KL divergence lacks that homogeneity. We use base-two logarithms (bits), with
$0\log_2 0 = 0$. Writing
$f = (1 - \FL)\,u + \FL\,g$, convexity of $D_{\mathrm{KL}}(\cdot \,\|\, u)$ in its first argument gives
\begin{equation}
  \begin{split}
    D_{\mathrm{KL}}(f \,\|\, u) &\;\le\; (1 - \FL)\,D_{\mathrm{KL}}(u \,\|\, u) \\
    &\quad + \FL\,D_{\mathrm{KL}}(g \,\|\, u) \\
    &\;=\; \FL \cdot D_{\mathrm{KL}}(g \,\|\, u),
  \end{split}
  \label{eq:kl_bound}
\end{equation}
with equality exactly when $\FL \in \{0, 1\}$ or $g = u$, by strict convexity in the first argument
(the uniform reference has full support). The bound holds in every first-token cell.

\subsection{The skew and the placement factor}
\label{app:formal_skew_placement}

The placement range is $\FL\cdot\operatorname{range}(h)$, and $h$, the accuracy of flipping items by gold
letter, is not a function of $g$. With two options every flipping item picks one fixed position, so $h = g$ when $\FL > 0$,
balanced RStd is $\FL\cdot\operatorname{std}(g)$ and the placement range is $2\FL\cdot\operatorname{std}(g)$; at a
fixed flip rate a lower skew lowers both. With more options the two can come apart:
a four-option item whose chosen contents over the cyclic orderings are $(0,1,0,1)$ has $\FL = 1$ and
$g = (\tfrac12, 0, \tfrac12, 0)$, and its placement range is $1$ if the correct content is $0$ but $0$ if it is
$2$. The same flip rate and skew can therefore carry the largest or no placement harm. On the four-option cells,
scored on the operation evaluation items, the mean cosine between the centred $g$ and $h$ is 0.92, so the link is close in practice but not an identity.

With the predictions held fixed, $\FL > 0$ and the correct content of every item drawn uniformly from its options, the
expected $h$ equals $g$: the orderings that place a content at
letter $j$ are those under which a correct answer of that content lands at $j$, so averaging $h_j$ over the
possible correct contents counts how often the answers land at $j$. The difference $h - g$ thus measures how
the actual correct contents line up with the letters the model favours. On the four-option cells its standard deviation across
letters is 0.04, against 0.12 for $h$. Changes move together as well: across the cells with at least three options, between the reversal product and
PriDe followed by it, the changes in skew and in $\operatorname{range}(h)$ have a Spearman correlation of 0.92 across cells, yet
in 3\% of cells the skew falls while the placement range rises, and in 6\% under PriDe against no
correction. \Cref{tab:composition} therefore reports the
placement range directly.

\section{Factor diagnostics: margins, prior and accuracy controls, closed models}
\label{app:factor_support}

This section gives the analyses behind \cref{sec:factors}: the canonical-margin and prior results with
their intervals, how the two factors share the variation of the pooled SD and how reliably they are
measured, a second prior estimator, and the controls on the flip rate's association with accuracy, and the two
factors on three closed models.
\looseness=-1

\subsection{Canonical margins and the option-symbol prior}
\label{app:factor_diag}

\begin{table*}[t]
\centering
\small
\setlength{\tabcolsep}{4pt}
\begin{tabular}{@{}lr@{}}
\toprule
Quantity & Estimate [95\% CI] \\
\midrule
\multicolumn{2}{l}{\textit{Canonical-order AUC, flipping against stable items}} \\
Top-two gap, all cells & $0.848\,[0.828, 0.863]$ \\
Top-two gap, least accurate third & $0.781\,[0.755, 0.806]$ \\
Top-two gap, middle accurate third & $0.832\,[0.813, 0.853]$ \\
Top-two gap, most accurate third & $0.908\,[0.898, 0.925]$ \\
Entropy, all cells & $0.848\,[0.825, 0.862]$ \\
\midrule
\multicolumn{2}{l}{\textit{Canonical-order AUC, flipping against one stable state}} \\
Top-two gap, against stable-correct & $0.877\,[0.854, 0.890]$ \\
Top-two gap, against stable-wrong & $0.697\,[0.687, 0.708]$ \\
\midrule
\multicolumn{2}{l}{\textit{Option-symbol prior}} \\
Partial Spearman with skew, given accuracy & $0.499\,[0.396, 0.589]$ \\
Direction cosine, $K=4$, estimated prior & $0.869\,[0.824, 0.899]$ \\
Direction cosine, $K=4$, letter-permuted & $-0.097\,[-0.111, -0.083]$ \\
Direction cosine, $K=10$, estimated prior & $0.881\,[0.834, 0.910]$ \\
Direction cosine, $K=10$, letter-permuted & $-0.028\,[-0.046, -0.002]$ \\
\bottomrule
\end{tabular}
\caption{\textbf{The canonical top-two gap separates flipping items in every accuracy third, and flipping answers point the way the estimated prior does.} Median AUC (a smaller gap or larger entropy scores as flipping), the partial Spearman between the skew and the standard deviation of the estimated prior after adjusting both for accuracy rank, and the median cosine between $g$ and the prior, each minus uniform; the prior is estimated on the held-out 5\% of each cell's items. Each AUC summarises only cells containing both compared item states. For the letter-permuted prior each cell takes the median cosine over its non-identity letter permutations, which places it near zero.}
\label{tab:factor_diagnostics}
\end{table*}

\begin{table*}[t]
\centering
\small
\setlength{\tabcolsep}{4pt}
\begin{tabular}{@{}lr@{}}
\toprule
Quantity & Estimate [95\% CI] \\
\midrule
\multicolumn{2}{l}{\textit{Strength $\alpha=0.5$}} \\
$\Delta\log$ skew, estimated & $-0.316\,[-0.397, -0.238]$ \\
$\Delta\log$ skew, permuted & $0.132\,[0.103, 0.165]$ \\
$\Delta\log$ skew, difference & $-0.449\,[-0.528, -0.373]$ \\
$\Delta\log$ flip rate, estimated & $-0.025\,[-0.032, -0.018]$ \\
$\Delta\log$ flip rate, permuted & $0.027\,[0.021, 0.033]$ \\
$\Delta\log$ flip rate, difference & $-0.052\,[-0.062, -0.043]$ \\
$\Delta$ accuracy (pp), estimated & $0.398\,[0.314, 0.484]$ \\
$\Delta$ accuracy (pp), permuted & $-0.171\,[-0.222, -0.124]$ \\
$\Delta$ accuracy (pp), difference & $0.569\,[0.463, 0.683]$ \\
\midrule
\multicolumn{2}{l}{\textit{Strength $\alpha=1$}} \\
$\Delta\log$ skew, estimated & $-0.453\,[-0.582, -0.324]$ \\
$\Delta\log$ skew, permuted & $0.296\,[0.249, 0.346]$ \\
$\Delta\log$ skew, difference & $-0.750\,[-0.867, -0.638]$ \\
$\Delta\log$ flip rate, estimated & $-0.021\,[-0.034, -0.008]$ \\
$\Delta\log$ flip rate, permuted & $0.071\,[0.057, 0.087]$ \\
$\Delta\log$ flip rate, difference & $-0.092\,[-0.110, -0.074]$ \\
$\Delta$ accuracy (pp), estimated & $0.469\,[0.352, 0.594]$ \\
$\Delta$ accuracy (pp), permuted & $-0.575\,[-0.692, -0.466]$ \\
$\Delta$ accuracy (pp), difference & $1.045\,[0.854, 1.247]$ \\
\bottomrule
\end{tabular}
\caption{\textbf{Dividing out the estimated prior lowers the skew and raises accuracy; the same prior with permuted letters does the opposite.} Mean change on the items not used to estimate the prior, after dividing by the prior raised to $\alpha$; permuted rows average over letter permutations, and differences are paired within each draw of the 95\% checkpoint-block intervals. For each factor and strength, log-change rows use the common cells where that factor is positive before correction and after correction by the estimated prior and every included letter permutation. The flip rate moves little either way.}
\label{tab:factor_interventions}
\end{table*}

\begin{wrapfigure}{R}{0.48\textwidth}
\setlength{\columnwidth}{\linewidth}
\centering
\includegraphics[width=\linewidth]{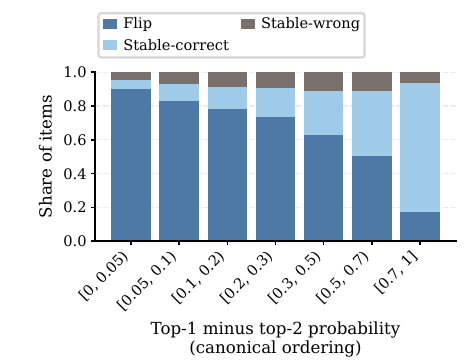}
\caption{\textbf{Items flip where their two leading options are close.} Share of stable-correct, flipping and stable-wrong items by the top-two letter-probability gap under the canonical order on all first-token cells, cells weighted equally.}
\label{fig:factors_gap}
\end{wrapfigure}

\Cref{tab:factor_diagnostics} gives the canonical-order and prior results of \cref{sec:factors} with their
intervals, and \cref{tab:factor_interventions} the prior correction at both tested strengths. The top-two gap
separates flipping from stable items (\cref{fig:factors_gap}) in every accuracy third, most sharply in the most accurate third, and
separates them less sharply from stable-wrong than from stable-correct items; entropy under the same ordering
does about as well. At
half strength the estimated prior still lowers the skew and the permuted one still raises it, by smaller amounts.

\begin{table}[t]
\centering
\small
\setlength{\tabcolsep}{4pt}
\begin{tabularx}{\linewidth}{@{}>{\raggedright\arraybackslash}Xr@{}}
\toprule
Flip outcome & Median AUC [95\% CI] \\
\midrule
All $K$ answers & $0.865\,[0.844, 0.883]$ \\
Canonical answer left out & $0.819\,[0.790, 0.838]$ \\
Left out; canonical answer correct & $0.837\,[0.814, 0.858]$ \\
Left out; canonical answer wrong & $0.633\,[0.617, 0.645]$ \\
All $K$ answers; items wrong under every ordering & $0.760\,[0.750, 0.771]$ \\
Left out; stratified by own and peer accuracy & $0.752\,[0.738, 0.772]$ \\
\bottomrule
\end{tabularx}
\caption{\textbf{The canonical top-two gap predicts flips on the other orderings and among items wrong everywhere.} Median AUC over first-token cells with at least three options and both outcome classes for the stated comparison. \emph{Left out} counts an item as flipping only when its other $K-1$ answers disagree, so the answer the gap is read from is not in the outcome. The last row compares items only within strata of their own canonical correctness and of five bins of the other checkpoints' accuracy on the item, keeping items seen by at least three of them.}
\label{tab:margin_controls}
\end{table}

Two features of the canonical-margin result could make it circular: the gap is read from the same canonical
answer that enters the flip outcome, and accuracy separates flipping from stable items by itself.
\Cref{tab:margin_controls} removes each in turn on the cells with at least three options, as in the mitigation
comparison. Leaving the canonical answer out of the outcome lowers the AUC but keeps it well above
chance, more so when the canonical answer is correct, and comparing items only within strata of their own
correctness and of how often other checkpoints answer them correctly keeps most of it. Among items wrong under
every ordering, whose accuracy is fixed at zero, the gap still separates the items whose wrong answer moves
from those that keep one wrong answer. The gap thus measures how closely the options compete, which accuracy
alone does not.
\looseness=-1

\subsection{Variation and reliability of the factors}
\label{sec:results_factors}
\label{sec:mechanism}
\label{sec:mechanism_partition}
\label{sec:mechanism_pooling}

\begin{table*}[t]
\centering
\small
\setlength{\tabcolsep}{4pt}
\begin{tabular}{@{}lcccc@{}}
\toprule
 & \textbf{All cells} & \textbf{$K{=}2$} & \textbf{$K{=}4$} & \textbf{$K{=}10$} \\
\midrule
\multicolumn{5}{@{}l}{\textit{Share of the variance of the log pooled selection-rate SD}} \\
Flip rate, $\log\FL$ & \begin{tabular}[t]{@{}c@{}}$0.41$\\{\scriptsize$[0.35, 0.48]$}\end{tabular} & \begin{tabular}[t]{@{}c@{}}$0.22$\\{\scriptsize$[0.14, 0.31]$}\end{tabular} & \begin{tabular}[t]{@{}c@{}}$0.45$\\{\scriptsize$[0.39, 0.52]$}\end{tabular} & \begin{tabular}[t]{@{}c@{}}$0.07$\\{\scriptsize$[0.04, 0.14]$}\end{tabular} \\
Skew factor, $\log\operatorname{std}(g)$ & \begin{tabular}[t]{@{}c@{}}$0.35$\\{\scriptsize$[0.30, 0.40]$}\end{tabular} & \begin{tabular}[t]{@{}c@{}}$0.62$\\{\scriptsize$[0.48, 0.79]$}\end{tabular} & \begin{tabular}[t]{@{}c@{}}$0.23$\\{\scriptsize$[0.20, 0.28]$}\end{tabular} & \begin{tabular}[t]{@{}c@{}}$0.64$\\{\scriptsize$[0.53, 0.74]$}\end{tabular} \\
Twice their covariance & \begin{tabular}[t]{@{}c@{}}$0.24$\\{\scriptsize$[0.18, 0.29]$}\end{tabular} & \begin{tabular}[t]{@{}c@{}}$0.16$\\{\scriptsize$[0.02, 0.28]$}\end{tabular} & \begin{tabular}[t]{@{}c@{}}$0.31$\\{\scriptsize$[0.26, 0.36]$}\end{tabular} & \begin{tabular}[t]{@{}c@{}}$0.29$\\{\scriptsize$[0.19, 0.36]$}\end{tabular} \\
\bottomrule
\end{tabular}

\vspace{6pt}

\setlength{\tabcolsep}{3pt}
\begin{tabular}{@{}lcccc@{}}
\toprule
 & \multicolumn{2}{c}{\textit{Own pool}} & \multicolumn{2}{c}{\textit{Cells common to the readouts}} \\
\textbf{Readout} & \textbf{\begin{tabular}[b]{@{}c@{}}Positional\\skew share\end{tabular}} & \textbf{\begin{tabular}[b]{@{}c@{}}Skew, Spearman\\with accuracy\end{tabular}} & \textbf{\begin{tabular}[b]{@{}c@{}}Positional\\skew share\end{tabular}} & \textbf{\begin{tabular}[b]{@{}c@{}}Skew, Spearman\\with accuracy\end{tabular}} \\
\midrule
First-token probability & \begin{tabular}[t]{@{}c@{}}$0.35$\\{\scriptsize$[0.30, 0.40]$}\end{tabular} & \begin{tabular}[t]{@{}c@{}}$-0.333$\\{\scriptsize$[-0.417, -0.237]$}\end{tabular} & \begin{tabular}[t]{@{}c@{}}$0.21$\\{\scriptsize$[0.17, 0.26]$}\end{tabular} & \begin{tabular}[t]{@{}c@{}}$-0.399$\\{\scriptsize$[-0.564, -0.210]$}\end{tabular} \\
Five-shot & \begin{tabular}[t]{@{}c@{}}$0.34$\\{\scriptsize$[0.28, 0.40]$}\end{tabular} & \begin{tabular}[t]{@{}c@{}}$-0.365$\\{\scriptsize$[-0.472, -0.255]$}\end{tabular} & \begin{tabular}[t]{@{}c@{}}$0.21$\\{\scriptsize$[0.17, 0.25]$}\end{tabular} & \begin{tabular}[t]{@{}c@{}}$-0.421$\\{\scriptsize$[-0.602, -0.219]$}\end{tabular} \\
Chain of thought & \begin{tabular}[t]{@{}c@{}}$0.45$\\{\scriptsize$[0.35, 0.59]$}\end{tabular} & \begin{tabular}[t]{@{}c@{}}$-0.223$\\{\scriptsize$[-0.408, -0.036]$}\end{tabular} & \begin{tabular}[t]{@{}c@{}}$0.30$\\{\scriptsize$[0.20, 0.40]$}\end{tabular} & \begin{tabular}[t]{@{}c@{}}$-0.423$\\{\scriptsize$[-0.625, -0.158]$}\end{tabular} \\
Direct answer, leading letter & \begin{tabular}[t]{@{}c@{}}$0.19$\\{\scriptsize$[0.14, 0.25]$}\end{tabular} & \begin{tabular}[t]{@{}c@{}}$-0.531$\\{\scriptsize$[-0.671, -0.349]$}\end{tabular} & \begin{tabular}[t]{@{}c@{}}$0.21$\\{\scriptsize$[0.16, 0.28]$}\end{tabular} & \begin{tabular}[t]{@{}c@{}}$-0.461$\\{\scriptsize$[-0.606, -0.301]$}\end{tabular} \\
\bottomrule
\end{tabular}
\caption{\textbf{The flip rate and the skew carry comparable shares of the variance of the log pooled SD.} Top: over the first-token cells, shares carried by the log flip rate, the log skew and twice their covariance, which sum to one. Bottom: the skew's share and its Spearman correlation with accuracy for each readout, on its own pool and on the cells run under the first-token, chain-of-thought and direct-answer readouts, with checkpoint-block intervals. The five-shot row uses the subset of the common pool with a five-shot readout. The skew is negatively associated with accuracy in every readout and pool.}
\label{tab:decomposition}
\end{table*}

\begin{table*}[t]
\centering
\small
\setlength{\tabcolsep}{4pt}
\begin{tabular}{@{}lccc@{}}
\toprule
 & \textbf{Spearman with accuracy} & \textbf{Split-half Spearman} & \textbf{Spearman--Brown} \\
\midrule
Flip rate $\FL$ & \begin{tabular}[t]{@{}c@{}}$-0.947$\\{\scriptsize$[-0.957, -0.934]$}\end{tabular} & $0.993$ & $0.996$ \\
Positional skew, $\operatorname{std}(g)$ & \begin{tabular}[t]{@{}c@{}}$-0.333$\\{\scriptsize$[-0.403, -0.254]$}\end{tabular} & $0.913$ & $0.954$ \\
Positional skew, median within a benchmark & \begin{tabular}[t]{@{}c@{}}$-0.389$\\{\scriptsize$[-0.526, -0.281]$}\end{tabular} & $0.947$ & $0.973$ \\
\bottomrule
\end{tabular}
\caption{\textbf{The flip rate and the skew are reliable across random item halves.} Over the first-token cells: Spearman correlations with accuracy, with 95\% cell-resampling intervals, and split-half correlations, uncorrected and Spearman--Brown corrected; the last row gives the median over benchmarks of each within-benchmark value, its interval resampling cells within each benchmark.}
\label{tab:three_factor}
\end{table*}

Across cells the flip rate is strongly associated with accuracy ($\spear = -0.947$) and the positional skew
$\operatorname{std}(g)$ more weakly ($\spear = -0.333$), so the skew can reorder the cells that the flip rate
ranks (\cref{fig:decomposition}).
On cells with positive pooled SD the identity gives $\log\operatorname{std}(f)=\log\FL+\log\operatorname{std}(g)$,
so the variance of the log pooled SD across cells is the sum of the two log-factor variances and twice their
covariance; we use natural logs and population moments over the same cells. Across all cells the flip-rate
term takes the largest share of this variance, the skew term slightly less and the covariance term the rest; the flip rate's
term is larger on the four-option cells and the skew's on the two- and ten-option cells, which come from
different benchmarks (\cref{tab:decomposition}).

Recomputed on two random halves of
each cell's items, both factors rank the cells nearly the same way (\cref{tab:three_factor}).
A conservative test also rejects uniform position choice in nearly two fifths of the first-token cells after
Holm correction. For each cell it bounds the largest deviation of the pooled position shares from uniform with a
weighted Hoeffding union bound that treats each item's $K$ answers as one cluster and the items as
independent draws within the checkpoint and benchmark.

\subsection{A second prior estimator}
\label{sec:results_skew_prior}
\label{app:skew_prior}

\begin{table*}[t]
\centering
\small
\begin{tabular}{llrl}
\toprule
\textbf{Comparison} & \textbf{Cells} & \textbf{Estimate} & \textbf{95\% interval} \\
\midrule
Prior and skew, Spearman & All & $0.745$ & $[0.680, 0.799]$ \\
Prior and skew, given accuracy & All & $0.728$ & $[0.664, 0.782]$ \\
Prior and skew, given accuracy & $K=2$ & $0.763$ & $[0.632, 0.848]$ \\
Prior and skew, given accuracy & $K=4$ & $0.677$ & $[0.601, 0.742]$ \\
Prior and skew, given accuracy & $K=10$ & $0.623$ & $[0.092, 0.838]$ \\
\bottomrule
\end{tabular}
\caption{\textbf{The mean option-symbol profile tracks the skew on disjoint items.} Spearman correlations across first-token cells between the profile's nonuniformity and the skew; \emph{given accuracy} rows are partial rank correlations, overall and by option count.}
\label{tab:skew_prior}
\end{table*}

\Cref{sec:factors} estimates the option-symbol prior with PriDe. As a second estimate we use a simpler one: on the
held-out 5\% of a cell's
items, the arithmetic mean over items and cyclic orderings of the probabilities the checkpoint gives the
option letters, whose nonuniformity is its SD across the $K$ letters. PriDe instead averages normalised
geometric means over orderings (\cref{app:pride}). On the other items we compute the skew of the flipping
items, so the two quantities share no item. Their nonuniformities rank the cells closely, also after removing
the linear association of each rank with the rank of the cell's cyclic accuracy and within each option count,
with a wide interval for the ten-option cells (\cref{tab:skew_prior}).

\begin{figure}[t]
\centering
\includegraphics[width=\linewidth]{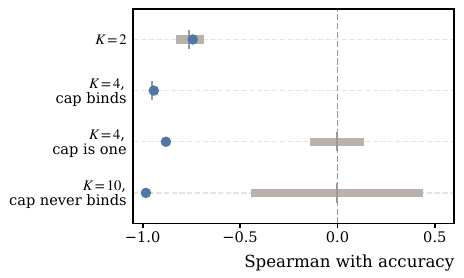}
\caption{\textbf{The flip rate's association with accuracy exceeds the cap null where the cap does not bind.} First-token cells by option count and cap stratum: dots are observed Spearman correlations with accuracy, grey bars the central 95\% of a null that permutes the flip rate divided by its cap $\min(1, K(1-A))$, and ticks its median.}
\label{fig:controls}
\label{fig:controls-cap}
\end{figure}

\subsection{Controls on the flip rate's association with accuracy}
\label{sec:results_controls}
\label{app:robustness}
\label{app:robustness_null}
\label{app:robustness_controls}

\begin{table*}[t]
\centering
\small
\begin{tabular}{lrrrr}
\toprule
\textbf{Cap null} & \textbf{Observed $\spear$} & \textbf{Null median} & \textbf{Null 95\% range} & \textbf{$P$} \\
\midrule
$K{=}2$, all & $-0.743$ & $-0.764$ & $[-0.830, -0.685]$ & $0.71$ \\
\quad binding & $-0.744$ & $-0.760$ & $[-0.830, -0.682]$ & $0.67$ \\
\quad cap equal to one & $1.000$ & $-1.000$ & $[-1.000, 1.000]$ & $1.00$ \\
$K{=}4$, all & $-0.968$ & $-0.623$ & $[-0.675, -0.569]$ & $<0.001$ \\
\quad binding & $-0.943$ & $-0.951$ & $[-0.960, -0.941]$ & $0.95$ \\
\quad cap equal to one & $-0.881$ & $0.000$ & $[-0.138, 0.135]$ & $<0.001$ \\
$K{=}10$, all & $-0.983$ & $0.000$ & $[-0.445, 0.438]$ & $<0.001$ \\
Cap not binding, all $K$ & $-0.884$ & $-0.051$ & $[-0.178, 0.072]$ & $<0.001$ \\
\bottomrule
\end{tabular}
\par\smallskip
\begin{tabular}{lr}
\toprule
\textbf{Scaled flip rate, cap binding} & \textbf{$\spear$ with accuracy} \\
\midrule
All option counts & $-0.226$ $[-0.379, -0.081]$ \\
$K{=}2$ & $0.149$ $[-0.027, 0.347]$ \\
$K{=}4$ & $0.042$ $[-0.151, 0.224]$ \\
\bottomrule
\end{tabular}
\caption{\textbf{The cap null is compatible with the flip rate's association with accuracy where the cap binds, but not on four-option cells where the cap equals one.} The scaled flip rate is $\FL/\min(1,K(1-A))$, and the null permutes it within each option count, and also within each cap subset for the binding and cap-one rows; the cap binds where $\min(1,K(1-A))<1$, and the two-option row where it equals one holds only two cells, where any Spearman correlation is $\pm1$. Null ranges are 2.5th--97.5th percentiles, and $P$ is the share of permutations, with a plus-one correction, whose $|\spear|$ reaches the observed one; lower-block intervals resample checkpoints (1.25th--98.75th).}
\label{tab:nulls}
\end{table*}

\begin{table*}[t]
\centering
\small
\setlength{\tabcolsep}{4pt}
\begin{tabular}{lrrr}
\toprule
Statistic & Item-disjoint & Benchmark-disjoint & Ordering-disjoint \\
\midrule
Flip rate $\FL$ & $-0.941$ & $-0.892$ & $-0.980$ \\
 & $[-0.949, -0.930]$ & $[-0.931, -0.811]$ & $[-0.993, -0.935]$ \\
Per-item KL to uniform & $-0.927$ & $-0.865$ & $-0.977$ \\
 & $[-0.940, -0.909]$ & $[-0.917, -0.764]$ & $[-0.992, -0.931]$ \\
Per-item selection-rate SD & $-0.855$ & $-0.873$ & $-0.977$ \\
 & $[-0.879, -0.816]$ & $[-0.921, -0.777]$ & $[-0.992, -0.931]$ \\
Pooled KL to uniform & $-0.808$ & $-0.688$ & $-0.939$ \\
 & $[-0.840, -0.771]$ & $[-0.769, -0.563]$ & $[-0.979, -0.881]$ \\
Pooled selection-rate SD & $-0.757$ & $-0.689$ & $-0.935$ \\
 & $[-0.795, -0.712]$ & $[-0.770, -0.562]$ & $[-0.978, -0.875]$ \\
Other count-based statistics & $-0.924$ to $-0.935$ & $-0.875$ to $-0.880$ & $-0.964$ to $-0.979$ \\
\midrule
Gap, divergence & $+0.119$ & $+0.204$ & $+0.038$ \\
 & $[0.089, 0.153]$ & $[0.132, 0.289]$ & $[-0.005, 0.077]$ \\
Gap, dispersion & $+0.097$ & $+0.203$ & $+0.042$ \\
 & $[0.063, 0.132]$ & $[0.130, 0.291]$ & $[0.000, 0.076]$ \\
\bottomrule
\end{tabular}
\caption{\textbf{The flip rate stays negatively associated with accuracy measured on disjoint items or orderings.} Signed Spearman correlations and magnitude gaps (per-item minus pooled; flip rate minus pooled for benchmark-disjoint), on first-token cells scored on held-out item folds, six benchmarks each scored against the other five, and all-ordering cells scored on held-out orderings; benchmark-disjoint estimates average within-benchmark correlations, and gap intervals are simultaneous over each column's two gaps. The other count-based statistics are the fluctuation rate, the mean pairwise disagreement between orderings and the mean entropy of each item's chosen content, given as the range of their correlations. In the all-ordering cells the per-item gaps are smaller and their intervals reach zero; scoring those cells on their own cyclic orderings instead leaves the gaps unchanged.}
\label{tab:fl_controls}
\end{table*}

\begin{table*}[t]
\centering
\small
\setlength{\tabcolsep}{5pt}
\begin{tabular}{@{}lrrrrrr@{}}
\toprule
\textbf{Model} & \textbf{Accuracy} & $\FL$ & $\operatorname{std}(g)$ & \textbf{Pooled SD} & \textbf{Repeat disagreement} & \textbf{Reorder disagreement} \\
\midrule
\texttt{gpt-6.1-sol} & $0.890$ & $0.036$ & $0.019$ & $0.0007$ & $0.017$ & $0.016$ \\
\texttt{gpt-6-luna} & $0.736$ & $0.342$ & $0.011$ & $0.0038$ & $0.168$ & $0.210$ \\
\texttt{gpt-5.6-terra} & $0.867$ & $0.116$ & $0.026$ & $0.0030$ & $0.062$ & $0.067$ \\
\bottomrule
\end{tabular}
\caption{\textbf{Reordering changes a closed model's answer about as often as asking the same ordering again.} MMLU, four cyclic orderings, one-letter reply. Both disagreement columns compare two calls on each item: repeat disagreement pairs ordering 0 with a second call on it, and reorder disagreement pairs ordering 0 with each other ordering and averages the three. Only for \texttt{gpt-6-luna} does reordering change answers more often than repeating, by an amount whose item-bootstrap interval excludes zero.}
\label{tab:codex_readout}
\end{table*}

Although the flip rate uses no gold label, it is computed from the same predictions as accuracy, and accuracy caps
it (\cref{eq:fl_cap}). The cap alone can produce an association, since an accurate cell has little room to
flip. Dividing a cell's flip rate by its cap, $\min(1, K(1-A))$, gives the \emph{scaled flip rate}; the cap
null permutes scaled flip rates across the cells of one option count, and within one cap
subset for the analyses restricted to binding or cap-one cells, which holds each cell's feasible region fixed. This null is compatible with the observed association where the cap binds, almost
everywhere on the two-option cells (\cref{fig:controls-cap}). With four options the cap equals one whenever
accuracy is at most 0.75, so on those cells it imposes no restriction beyond $\FL \le 1$; there the null
centres on zero, yet the flip rate still falls closely with accuracy ($\spear = -0.881$; \cref{tab:nulls}).
These results rule out the cap as a complete explanation, though correctness could still act through other routes.

Separately, we measure accuracy on items, benchmarks or orderings that the flip rate does not use. With accuracy read off one of five balanced
item folds and the flip rate off the other four, rotated over all five, the flip rate tracks it about as
closely as on all items. With each of the six benchmarks run on every checkpoint set against mean accuracy
on the other five, and on the cells run under all $K!$ orderings with accuracy read off the non-cyclic
orderings, it stays strongly associated with accuracy (\cref{tab:fl_controls}). Each control resamples whole
checkpoints.

\subsection{Closed models}
\label{app:readouts_codex}

We read three closed models on the MMLU items under the four cyclic orderings
with a one-letter reply, and ask ordering 0 a second time. The same prompt can return a different answer, so
the flip rate mixes order sensitivity with repeat disagreement. The flip rate compares four calls and a repeat
compares two, so we compare like with like: two calls on the same ordering against two calls on different
orderings. Reordering changes answers about as often as repeating for two of the three models and somewhat
more often for the third (\cref{tab:codex_readout}). Repeat disagreement is therefore substantial next to reorder disagreement,
and these measurements do not identify how much of the flip rate is due to reordering, so we do not set their factors beside the open-weight cells. The conservative uniformity test
above detects a positional skew in none of the three.

\section{Operations and composition}
\label{app:operations}
\label{app:pools_operations}

This section gives the protocol of the mitigation operations in \cref{sec:operations,sec:composition}, the
decomposition of the letter preference that the reversal product averages, and each operation's effects.

\subsection{Protocol}
\label{app:operations_protocol}

All operations use the same first-token predictions, items and orderings. Each comparison uses a calibration split of 5\% of each cell's items, at least one, and evaluates every
operation on the remaining 95\%; no gold label enters any prior or operation. The cohort is the cells with at
least three options, each scored under its cyclic orderings and their reversals.
Each item is presented in its $K$ cyclic orderings; in our protocol an operation may query further orderings provided no two
presented orderings share a query, and every statistic is computed over the $K$ answers, each counted at the letter that holds the chosen content in its presented ordering. Comparisons of the corrections alone use
every first-token cell, two-option cells included. The prior
experiment of \cref{sec:factors} uses the same split.

\paragraph{Prior correction.}
\label{app:pride}
PriDe's prior is estimated on the calibration items: for each of them
the normalised geometric mean over the $K$ orderings of its option-letter probabilities, averaged over
those items \citep{zheng2024large}. It is divided out of every ordering of the evaluated items; when a cell is scored on its canonical ordering alone the prior is applied
to that ordering only.
The published method instead
scores the estimation items by permutation averaging and the remaining items in their default ordering, and
averages over five splits. Batch calibration divides by the mean letter probability over the estimation
items' orderings, an arithmetic-mean variant of the log-score correction of \citet{zhou2024batch}, which
estimates over the test batch; variants that average log probabilities, or estimate over all test items,
move both factors the same way.
Neither correction adds an option, and the argmax is over the original $K$.

\paragraph{Aggregation.} The probability product multiplies, for each content, its probabilities under the
presented ordering and under that ordering's reversal and returns the content with the largest product,
following the forward and backward pairing of \citet{wei-etal-2024-unveiling}; exact ties go to the earliest
letter of the presented ordering. The reversal of a cyclic ordering is not cyclic for $K \ge 3$, so the partners of different
presented orderings are distinct. A majority vote over a fixed pool of orderings
\citep{pezeshkpour2024large}, with ties broken by content rather than by presented position, returns one
content for every presented ordering; its flip rate, pooled SD and
balanced RStd are zero by construction, and with no flipping item its skew is undefined; the convention
$g=\mathrm{unif}$ at $\FL=0$ in \cref{app:formal} is algebraic, not an estimate of that skew. We therefore treat the vote as
the zero-flip case and compare prior correction with the reversal product on the factors.

\paragraph{Compositions and controls.} A composition applies a correction and then an aggregation to the
corrected probabilities; composing two corrections re-estimates the second prior on the first one's output.
The letter-permuted control divides by the estimated prior after permuting its letters, every non-identity
permutation for four options, the swap for two and distinct non-identity permutations drawn uniformly at random
for ten, and averages the result. The canonical margin is
the gap between the two largest letter probabilities under $\pi_0$ alone. Rank reversals count the
checkpoint pairs on a benchmark whose accuracy difference changes sign across the presented cyclic orderings, on the
items shared by all of them; counting each pair on the items its two checkpoints share instead lowers every
level, and the composition's difference from the reversal product stays unresolved under both counts. A composition is compared with each of its components on the same cells, with
paired checkpoint-block intervals.

\subsection{What the reversal product cancels}
\label{app:operations_cancel}

Let a letter preference $b$ add $b_i$ to the log score of whatever content letter $i$ shows, with letters indexed
$i = 0, \ldots, K-1$. Reversal maps letter $i$ to $K-1-i$, and the product of an
ordering and its reversal adds both log scores of a content, so a content shown at letter $i$ receives
$b_i + b_{K-1-i}$ on top of the scores the model gives it without $b$. Writing $b = b^{\mathrm{sym}} + b^{\mathrm{anti}}$ with
$b^{\mathrm{sym}}_i = (b_i + b_{K-1-i})/2$, the summed score receives $2b^{\mathrm{sym}}$; halving it leaves the argmax
unchanged, so the product keeps $b^{\mathrm{sym}}$, a preference that treats
mirrored letters alike, and cancels $b^{\mathrm{anti}}$. For four options every centred preference is
$\alpha\,(1,1,-1,-1) + \beta\,(1,-1,-1,1) + \gamma\,(1,-1,1,-1)$ over \opt{A} to \opt{D}; the product cancels the
front-to-back part $\alpha$ and the alternating part $\gamma$ and keeps the outer-to-inner part $\beta$. A
preference for \opt{A} alone, $(3,-1,-1,-1)$, keeps $(1,-1,-1,1)$; a steady drift $(3,1,-1,-3)$ cancels entirely.
The remaining skew also depends on the rest of the scores and on which items still flip, so the identity
does not fix the change in skew. Empirically, estimating each cell's letter preference
from the mean letter logits of its held-out 5\% of items, the share that survives the mirroring,
$\operatorname{std}(b^{\mathrm{sym}})/\operatorname{std}(b)$, has a Spearman correlation of 0.29 [0.17, 0.41] with the
reversal product's change in log skew on the other 95\%.

\subsection{Adding a known letter preference}
\label{app:operations_injection}

To test the decomposition on real scores, we add a letter-preference vector of known shape, with mean zero over
the letters and a root-mean-square of one logit, to every option logit of the MMLU cells, re-estimate the priors on the calibration items and apply every operation again
(\cref{fig:injection}). Under a front-to-back or an alternating preference, the two shapes that mirroring
cancels, the reversal product's answers do not change at all, so the uncorrected rises in the placement range
never reach it. An outer-versus-inner preference passes through the
product almost in full, and a preference for \opt{A} alone
passes through in its symmetric part. PriDe largely removes every added shape, and so does PriDe followed by the
product; batch calibration absorbs the outer-versus-inner shape only in part. The addition is an intervention on output scores, so it tests what each operation does
to a preference of a given shape; it does not show that models form their preferences additively, which the
association across cells in \cref{sec:operations} speaks to only indirectly.

\subsection{Effects of each operation}
\label{app:operations_effects}

\begin{table*}[t]
\centering
\small
\setlength{\tabcolsep}{4pt}
\begin{tabular}{@{}lrrrr@{}}
\toprule
Operation & \shortstack{Stop\\flipping (\%)} & \shortstack{Start\\flipping (\%)} & \shortstack{Resolved flips\\settling wrong (\%)} & \shortstack{Former flips on\\favoured letter (\%)} \\
\midrule
PriDe & $3.5\,[2.9, 4.0]$ & $2.7\,[2.3, 3.1]$ & $38.1\,[35.1, 40.9]$ & $22.8\,[21.6, 23.8]$ \\
Batch & $2.6\,[2.2, 3.1]$ & $1.3\,[1.1, 1.6]$ & $39.1\,[36.3, 41.9]$ & $27.5\,[26.5, 28.5]$ \\
PriDe then batch & $3.7\,[3.1, 4.2]$ & $2.5\,[2.3, 2.8]$ & $39.4\,[36.8, 42.0]$ & $22.8\,[21.8, 23.6]$ \\
Reversal product & $15.9\,[14.7, 17.0]$ & $0.9\,[0.8, 1.0]$ & $38.8\,[36.3, 41.3]$ & $26.4\,[25.5, 27.3]$ \\
PriDe then reversal & $16.7\,[15.3, 18.1]$ & $1.0\,[0.9, 1.2]$ & $39.0\,[36.5, 41.5]$ & $22.5\,[21.9, 23.0]$ \\
Batch then reversal & $16.9\,[15.6, 18.3]$ & $0.9\,[0.8, 1.0]$ & $39.2\,[36.6, 41.7]$ & $24.4\,[23.9, 24.9]$ \\
\bottomrule
\end{tabular}
\caption{\textbf{Corrections move flips both ways, while the reversal product mostly resolves them, about two in five onto a wrong option.} Shares of evaluated items that stop or start flipping, the share of resolved flips that become stable-wrong, and the share of previously flipping items' answers on the PriDe prior's top letter ($34.9\%$ before any operation), on the cells with at least three options.}
\label{tab:operations_detail}
\end{table*}

\Cref{tab:operations_detail} gives each operation's effects on all eligible cells. Under the corrections items
stop and start flipping, so the net flip rate moves by about a point while answers leave the favoured letter;
the reversal product mostly resolves flips, about two in five of them onto a wrong option.

\begin{figure*}[t]
\begin{minipage}[t]{0.48\textwidth}\vspace{0pt}
\setlength{\columnwidth}{\linewidth}
\centering
\small
\setlength{\tabcolsep}{4pt}
\begin{tabularx}{\linewidth}{@{}>{\raggedright\arraybackslash}Xr@{}}
\toprule
Property of the settled wrong content & Mean [95\% CI] \\
\midrule
Sole most-chosen content (\%) & $90.4\,[89.3, 91.3]$ \\
Most-chosen content, ties included (\%) & $98.9\,[98.7, 99.1]$ \\
Share of cyclic answers choosing it (\%) & $63.8\,[63.1, 64.5]$ \\
Among the top two in every query (\%) & $65.7\,[63.0, 68.2]$ \\
Item already wrong under every cyclic ordering (\%) & $37.9\,[36.3, 39.5]$ \\
Canonical top-two gap & $0.420\,[0.363, 0.477]$ \\
Canonical top-two gap, newly stable-correct & $0.484\,[0.426, 0.541]$ \\
Canonical top-two gap, stable-wrong before & $0.597\,[0.540, 0.653]$ \\
\bottomrule
\end{tabularx}
\captionof{table}{\textbf{When the reversal product makes a flipping item stably wrong, it usually settles on the option its queries already chose most often.} Items that flip without aggregation and become stable-wrong under the product, described by the wrong content they settle on, counted over the item's cyclic and reversed queries except where a row says cyclic. Shares and mean probability gaps within a cell, averaged over cells; the last two rows give the gap for comparison groups.}
\label{tab:stable_wrong_support}
\end{minipage}\hfill
\begin{minipage}[t]{0.48\textwidth}\vspace{0pt}
\setlength{\columnwidth}{\linewidth}
\centering
\includegraphics[width=\linewidth]{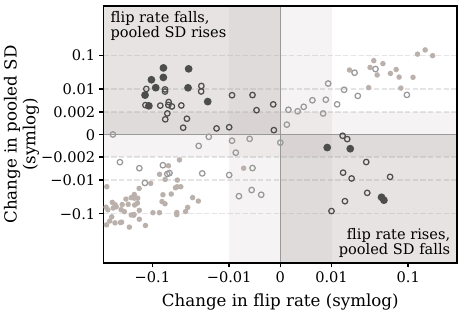}
\captionof{figure}{\textbf{A lower pooled SD need not mean fewer flips.} Base and post-trained checkpoint pairs on one benchmark, first-token readout. Dark grey points move the pooled SD and the flip rate in opposite directions and light grey points the same way, with shaded quadrants marking opposite moves; filled points have both item-bootstrap intervals excluding zero. A rise in the skew can outweigh a fall in the flip rate.}
\label{fig:reversals}
\label{fig:reversals-b}
\end{minipage}
\end{figure*}

\Cref{tab:stable_wrong_support} describes the wrong option such an item settles on, counting the item's answers
over all its cyclic and reversed queries. In nine of ten cases it is the single option chosen most
often, and in almost all others it ties for most often, so ties among equally frequent options decide few of
these items. The
settled option is among the top two in every query for about two in three items, and two in five
items were already wrong under every cyclic ordering before aggregation and only moved among wrong options. Their
canonical gaps are smaller than those of items the product settles correctly and much smaller than those of
items stable-wrong from the start. Among the items that become stable-wrong, aggregation therefore mostly settles
on a distractor the orderings already favoured. The items are selected by their outcome, so this describes the transition and does not identify its cause.
Shares are computed within each cell and averaged over cells; weighting cells by the share of their items whose flips
resolve instead raises the share of resolved flips that end wrong only slightly.

\subsection{PriDe on all cells and checkpoint comparisons}
\label{app:operations_checkpoints}
\label{sec:results_decision}
\label{app:reversals}

\begin{table*}[t]
\centering
\small
\setlength{\tabcolsep}{6pt}
\begin{tabular}{@{}lrrr@{}}
\toprule
 & \textbf{\begin{tabular}[b]{@{}c@{}}Pooled SD\\opposite (\%)\end{tabular}} & \textbf{\begin{tabular}[b]{@{}c@{}}Per-item KL\\opposite (\%)\end{tabular}} & \textbf{\begin{tabular}[b]{@{}c@{}}Opposite if\\resolved (\%)\end{tabular}} \\
\midrule
Post-training pairs & 30.1 & 6.8 & 15.9 \\
Checkpoint pairs & 17.3 & 4.6 & 9.6 \\
Scale trends & 5.8 & 0.0 & -- \\
PriDe cells & 20.0 & 17.7 & 10.5 \\
\bottomrule
\end{tabular}
\caption{\textbf{A comparison read off the pooled SD can point opposite to the flip rate, most often between a base checkpoint and its post-trained release.} Share of comparisons across first-token cells in which the pooled SD, or the per-item KL, moves opposite to the flip rate; scale trends compare rank correlations with parameter count within a family, training status and benchmark. The last column keeps comparisons with both changes resolved.}
\label{tab:reversals}
\end{table*}

As a secondary analysis on every cell, we apply PriDe with its prior
estimated on 5\% of each cell's items and compare both factors on the remaining items before and after. We
condition on the cells whose pooled SD falls, so the medians below describe attribution within that
selected set.

Among the cells where PriDe lowers the pooled SD, the skew's share of the log reduction,
$\Delta\log\operatorname{std}(g)/\Delta\log\operatorname{std}(f)$, has a median of 0.96. The flip rate's share is
one minus it, and either can fall outside zero and one when the two factors move in opposite directions. In about a quarter of those cells the flip rate's point estimate does not fall.
These results concern the pooled selection-rate SD, whereas \citet{zheng2024large} report RStd and accuracy, and accuracy with the answer moved to each letter. PriDe's reduction in the pooled SD is driven mainly by the skew of the remaining flips.
Where PriDe raises the pooled SD instead, the flip rate's share of the log increase has a median
under a tenth, so the change again runs mainly through the skew.

A comparison read off the pooled SD need not be a comparison of how often answers move. For a base
checkpoint and its post-trained release on one benchmark, the pooled SD and the flip rate point
opposite ways in about one pair in six, counting only pairs in which both changes are resolved by an
item bootstrap.
Post-trained releases usually flip less, and where the pooled SD rises regardless, the rise in the
skew of their remaining flips more than offsets the fall in the flip rate in their product
(\cref{fig:reversals-b}).
Trends with scale within a family rarely reverse (\cref{tab:reversals}).

Each comparison in \cref{tab:reversals} is read from the pooled selection-rate SD and from the flip
rate, and counts as opposite when the two changes differ in sign. A change in a post-training pair,
checkpoint pair or PriDe cell is resolved when its 95\% item-bootstrap interval excludes zero. Opposite directions
remain among comparisons with both changes resolved, at a lower rate.
Among post-training pairs, counting unresolved changes roughly doubles the share of opposite
directions, mostly through changes in the pooled SD much smaller than the typical one.

\obtAppendixSection{Cyclic orderings against all \texorpdfstring{$K!$}{K!}}
\label{app:orderings}

The cyclic protocol uses $K$ of the $K!$ orderings. Since any item that flips under the cyclic orderings also flips under the full set, the cyclic flip rate is a lower bound on the flip rate over all orderings. This appendix measures the gap, compares mitigation operations across balanced ordering sets, and tests other designs.

\begin{table*}[t]
\centering
\small
\setlength{\tabcolsep}{3pt}
\begin{tabular}{ll}
\toprule
\textbf{Condition} & \textbf{Measurement} \\
\midrule
\multicolumn{2}{l}{\textit{Number of orderings: cyclic against all $K!$}} \\
\quad $\FL_{\mathrm{cyc}} \le \FL_{\mathrm{full}}$ holds & $100\%$ of cells \\
\quad Relative undercount, mean over cells & $25.7\%$ \\
\quad $\spear(\mathrm{acc}, \FL)$, cyclic vs.\ all $K!$ & $-0.980$ vs.\ $-0.974$ \\
\midrule
\multicolumn{2}{l}{\textit{Which orderings: mean flip rate, and $\spear$ of the per-cell flip rate with the cyclic one}} \\
\quad Cyclic, $K$ orderings & $0.377$ \\
\quad Random, same size & $0.357$; $\spear = +0.999$ \\
\quad All $K!$ orderings & $0.472$; $\spear = 0.997$ \\
\midrule
\multicolumn{2}{l}{\textit{Cyclic flip rate minus its exact expectation over four-ordering subsets}} \\
\quad Mean; minimum & $0.021$; $0.001$ \\
\bottomrule
\end{tabular}
\caption{\textbf{The cyclic protocol undercounts flips in every cell but barely changes the ranking of cells.} Cells run under all 24 orderings of four options; random subsets average draws of four orderings, and the exact expectation averages over every four-ordering subset.}
\label{tab:scope}
\end{table*}

\subsection{How loose the bound is}
\label{app:orderings_bound}

The flip-rate bound holds in every cell evaluated under all $K!$ orderings, and the cyclic protocol understates the
flip rate by $0.095$ on average, yet the cyclic and full flip rates order the cells almost
identically and track accuracy almost equally (\cref{tab:scope}).

The cyclic orderings, which place the gold option once in each
position, give a flip rate $7.2\%$ higher per cell on average than a random subset of the same size,
with nearly the same order of cells.
The cyclic flip rate also exceeds its exact expectation over all four-ordering subsets in every cell.

\paragraph{Coverage of correctness-changing items.}
An item whose correctness is the same under all $K!$ orderings keeps it on any subset of them, so the cyclic
subset errs only by missing items whose correctness changes with the ordering; pooled over the cells of the six checkpoints used for the partition, it finds
$76.9\%$ of them.
\looseness=-1

\subsection{Operations beyond the cyclic orderings}
\label{app:orderings_operations}

\begin{table*}[t]
\centering
\small
\setlength{\tabcolsep}{4pt}
\begin{tabular}{@{}lrrrrr@{}}
\toprule
 & \multicolumn{2}{c}{Flip rate (\%)} & \multicolumn{2}{c}{Placement range (pp)} & \shortstack{Cyclic-stable items\\flipping elsewhere (\%)} \\
\cmidrule(lr){2-3}\cmidrule(lr){4-5}
 & Cyclic set & Other sets & Cyclic set & Other sets & \\
\midrule
\multicolumn{6}{@{}l}{\textit{All cells, uncorrected answers}} \\
Level & $37.7$ & $38.2$ ($37.5$, $38.7$) & $14.8$ & $14.8$ ($14.6$, $15.2$) & $10.5\,[6.9, 14.7]$ \\
\midrule
\multicolumn{6}{@{}l}{\textit{Cells with unrounded option probabilities}} \\
Uncorrected answers, level & $45.2$ & $45.4$ ($44.7$, $45.9$) & $26.1$ & $26.3$ ($25.8$, $26.6$) & $14.6$ \\
PriDe, change & $-0.1$ & $0.1$ ($-0.3$, $0.4$) & $-18.3$ & $-16.6$ ($-17.5$, $-15.7$) & $15.8$ \\
Reversal product, change & $-9.7$ & $-10.5$ ($-11.3$, $-9.3$) & $-11.6$ & $-11.7$ ($-11.9$, $-11.4$) & $4.6$ \\
PriDe then reversal, change & $-9.1$ & $-9.5$ ($-10.3$, $-8.4$) & $-21.7$ & $-21.7$ ($-22.2$, $-21.2$) & $6.3$ \\
\bottomrule
\end{tabular}
\caption{\textbf{The cyclic set is a typical balanced set of four orderings, and on the cells with unrounded probabilities each operation's effect is consistent across them.} Each set shows every content once at every position and holds no ordering with its reversal. Other-set columns give the mean and range over the other such sets; the last column is the share of cyclic-stable items that flip on another set, averaged over the other sets. The top row covers all cells run under all orderings, with a 95\% checkpoint-block interval for that share; the lower rows cover the all-ordering cells with unrounded option probabilities, using only items with unambiguous answers for each operation at the probability precision used and comparing each operation with the uncorrected answers on those same items.}
\label{tab:full_domain}
\end{table*}

The cyclic orderings are one of twelve sets of four orderings of four options that present every content once
at every position and contain no ordering together with its reversal. On each such set every method returns
four answers per item, and the reversal product pairs each presented ordering with a distinct partner, so
the sets compare the operations on equal terms (\cref{tab:full_domain}). On the cells scored under
all 24 orderings, the cyclic set is typical: its flip rate and placement range lie within the narrow spread of
the other sets. Stability does not transfer between sets, however: averaged over the other sets, about one item in ten that is stable
on the cyclic set flips on a given other set.

The operation rows use the all-ordering cells whose option probabilities are stored unrounded and, for each operation, only items whose answer under the operation is certified to stay the same when each stored probability varies
within a conservative six-decimal envelope. This keeps about nine in ten items under PriDe and the composition
and more under the reversal product.
On these items each
operation's effect is consistent across the sets: PriDe
leaves the flip rate and lowers the placement range, the reversal product lowers the flip rate by about
ten points, and the composition lowers both. Items stable on the cyclic set under the reversal product
flip on another set about a third as often as under the uncorrected answers.

\subsection{Other ordering designs}
\label{app:orderings_designs}

The pooling identity is exact on every position-balanced design tested, including every Latin-square set of four
orderings.
Random subsets of four orderings, drawn per item or shared by all items, leave the pooled SD about a sixth off the product at
the median, and further in the more accurate cells; balancing only the gold option's position leaves a
smaller gap (\cref{tab:designs}).
A random set shared by all items also removes the per-item SD's lead over the pooled one in tracking
accuracy.
Under all 24 orderings the skew carries a larger share of the pooled SD's variation than under the
cyclic orderings.

\begin{table*}[t]
\centering
\small
\setlength{\tabcolsep}{3pt}
\begin{tabular}{@{}llcccccc@{}}
\toprule
 & & \multicolumn{2}{c}{\textit{Pooling identity}} & \multicolumn{4}{c}{\textit{Spearman with accuracy}} \\
\textbf{Ordering design} & \textbf{\begin{tabular}[b]{@{}c@{}}Position\\balanced\end{tabular}} & \textbf{\begin{tabular}[b]{@{}c@{}}Relative\\residual,\\median\end{tabular}} & \textbf{\begin{tabular}[b]{@{}c@{}}Relative\\residual,\\largest\end{tabular}} & \textbf{\begin{tabular}[b]{@{}c@{}}Flip\\rate\end{tabular}} & \textbf{\begin{tabular}[b]{@{}c@{}}Positional\\skew\end{tabular}} & \textbf{\begin{tabular}[b]{@{}c@{}}Pooled\\SD\end{tabular}} & \textbf{\begin{tabular}[b]{@{}c@{}}Per-item\\minus\\pooled SD\\magnitude\end{tabular}} \\
\midrule
Cyclic & yes & $4\times10^{-16}$ & $6\times10^{-15}$ & \begin{tabular}[t]{@{}c@{}}$-0.980$\\{\scriptsize$[-0.99, -0.93]$}\end{tabular} & \begin{tabular}[t]{@{}c@{}}$-0.580$\\{\scriptsize$[-0.84, -0.36]$}\end{tabular} & \begin{tabular}[t]{@{}c@{}}$-0.935$\\{\scriptsize$[-0.98, -0.86]$}\end{tabular} & \begin{tabular}[t]{@{}c@{}}$0.042$\\{\scriptsize$[0.00, 0.08]$}\end{tabular} \\
All 24 orderings & yes & $4\times10^{-15}$ & $6\times10^{-14}$ & \begin{tabular}[t]{@{}c@{}}$-0.974$\\{\scriptsize$[-0.99, -0.92]$}\end{tabular} & \begin{tabular}[t]{@{}c@{}}$-0.621$\\{\scriptsize$[-0.86, -0.38]$}\end{tabular} & \begin{tabular}[t]{@{}c@{}}$-0.932$\\{\scriptsize$[-0.99, -0.86]$}\end{tabular} & \begin{tabular}[t]{@{}c@{}}$0.043$\\{\scriptsize$[-0.01, 0.08]$}\end{tabular} \\
Latin square & yes & $5\times10^{-16}$ & $3\times10^{-15}$ & $-0.979$ & $-0.584$ & $-0.924$ & $0.052$ \\
Gold at each position & gold only & $0.070$ & $0.186$ & $-0.977$ & $-0.577$ & $-0.922$ & $0.064$ \\
Random set per item & no & $0.157$ & $0.674$ & $-0.978$ & $-0.579$ & $-0.885$ & $0.072$ \\
Random set shared by items & no & $0.178$ & $0.694$ & $-0.976$ & $-0.599$ & $-0.872$ & $-0.250$ \\
\bottomrule
\end{tabular}
\caption{\textbf{The pooling identity is exact on every position-balanced ordering design and departs from it on unbalanced ones.} Rows are ways of choosing each item's orderings in the cells run under all 24 orderings; the gold row puts the correct option once at each position. The relative residual is $|\operatorname{std}(f) - \FL\cdot\operatorname{std}(g)|/\operatorname{std}(f)$; intervals resample checkpoints, and Latin-square and random rows are medians over sets and draws. The two residual columns take each cell's median and give its median and maximum over cells. The last column is the magnitude of the per-item SD's Spearman correlation with accuracy minus that of the pooled SD.}
\label{tab:designs}
\end{table*}

Subtracting the stable-item deviation $b_S$, which requires no gold labels, restores the pooling identity under every random draw, while the uncorrected deviation remains within its theoretical bound. In the median draw, the corrected statistic $\tilde s(f)$ tracks accuracy better than the uncorrected pooled SD and nearly as well as the cyclic design~(\cref{tab:design_correction}).

\begin{table*}[t]
\centering
\small
\setlength{\tabcolsep}{4pt}
\begin{tabular}{@{}lccccc@{}}
\toprule
\textbf{Design} & \begin{tabular}[b]{@{}c@{}}\textbf{Largest corrected}\\\textbf{absolute residual}\end{tabular} & \begin{tabular}[b]{@{}c@{}}\textbf{Bound}\\\textbf{violations}\end{tabular} & \begin{tabular}[b]{@{}c@{}}\textbf{Median residual}\\\textbf{/ bound}\end{tabular} & \begin{tabular}[b]{@{}c@{}}\textbf{Pooled SD}\\\textbf{Spearman}\end{tabular} & \begin{tabular}[b]{@{}c@{}}\textbf{Corrected SD}\\\textbf{Spearman}\end{tabular} \\
\midrule
Cyclic & $5.55\times10^{-17}$ & 0 & -- & $-0.935$ & $-0.935$ \\
\begin{tabular}[b]{@{}c@{}}Random set of 4 orderings\\shared by all items\end{tabular} & $1.39\times10^{-16}$ & 0 & $0.611$ & \begin{tabular}[t]{@{}c@{}}$-0.872$\\{\scriptsize$[-0.922, -0.791]$}\end{tabular} & \begin{tabular}[t]{@{}c@{}}$-0.924$\\{\scriptsize$[-0.947, -0.884]$}\end{tabular} \\
Random 4 orderings per item & $1.11\times10^{-16}$ & 0 & $0.734$ & \begin{tabular}[t]{@{}c@{}}$-0.885$\\{\scriptsize$[-0.926, -0.816]$}\end{tabular} & \begin{tabular}[t]{@{}c@{}}$-0.922$\\{\scriptsize$[-0.940, -0.889]$}\end{tabular} \\
\bottomrule
\end{tabular}
\caption{\textbf{Removing the design term restores the identity and, in the median draw, strengthens the accuracy association under random ordering subsets.} The corrected residual is $|\operatorname{std}(f - b_S)-\FL\operatorname{std}(g)|$, the largest over cells and draws; the bound $\operatorname{std}(b_S)$ is zero for the cyclic design, and the ratio column divides the uncorrected residual by it. Spearman correlations are with same-design accuracy; random designs show the median and {\scriptsize 2.5--97.5\%} range over draws.}
\label{tab:design_correction}
\end{table*}

\section{Metric definitions}
\label{app:metrics}
\label{app:metrics_defs}

Fix a model, a benchmark of $N$ items and the cyclic ordering set
$S_{\mathrm{cyc}} = \{\pi_0, \ldots, \pi_{K-1}\}$, and write $z_q(\pi_s)$ for the content index the
model returns under ordering $s$. Entropies are in bits. Every statistic in \cref{tab:metric_defs} is
computed from the same prediction matrix of $K$ forward passes and reads no gold label; the
fluctuation rate reads two of the passes.

\begin{table*}[t]
\centering
\small
\setlength{\tabcolsep}{4pt}
\begin{tabular}{@{}lll@{}}
\toprule
\textbf{Statistic} & \textbf{Definition} & \textbf{Related measure} \\
\midrule
\multicolumn{3}{@{}l}{\textit{Per-item consistency}} \\
Flip rate $\FL$
  & $\frac{1}{N}\bigl|\{q : z_q \text{ non-constant on } S_{\mathrm{cyc}}\}\bigr|$
  & Analogue of $1-{}$ReCon \citep{cocchieri-etal-2026-remedqa} \\
Fluctuation rate
  & $\frac{1}{N}\sum_q \mathbf{1}\bigl[z_q(\pi_0) \neq z_q(\pi_{\lfloor K/2 \rfloor})\bigr]$
  & adapted \citep{wei-etal-2024-unveiling} \\
\midrule
\multicolumn{3}{@{}l}{\textit{Pooled position statistics}} \\
$D_{\mathrm{KL}}$ to uniform
  & $\sum_{r} f(r) \log_2 \bigl(K f(r)\bigr)$
  & CKLD \citep{choi-etal-2025-mitigating} \\
Selection-rate SD
  & $\bigl(\frac{1}{K}\sum_{r} (f(r) - 1/K)^2\bigr)^{1/2}$
  & RStd \citep{zheng2024large} \\
\midrule
\multicolumn{3}{@{}l}{\textit{Per-item position statistics}} \\
Per-item $D_{\mathrm{KL}}$
  & $\frac{1}{N}\sum_q \sum_{r} f_q(r) \log_2 \bigl(K f_q(r)\bigr)$
  & -- \\
Per-item SD
  & $\frac{1}{N}\sum_q \operatorname{std}(f_q)$
  & -- \\
\bottomrule
\end{tabular}
\caption{\textbf{The flip rate and the fluctuation rate count content changes; position statistics measure positional imbalance, pooled or within each item.} All use the cyclic predictions: $z_q$ is the chosen content, $f(r)$ the share of all answers at position $r$ and $f_q(r)$ the share of item $q$'s answers there, and the fluctuation rate compares two cyclic orderings. The per-item statistics apply the pooled ones to each item's own answers and average over items. Larger values mean more disagreement or more imbalance.}
\label{tab:metric_defs}
\end{table*}

\paragraph{The fluctuation rate is adapted to the cyclic set.}
\citet{wei-etal-2024-unveiling} define it over a forward and a reversed ordering. For $K \ge 3$ no
rotation is the reversal, so we compare rotations $0$ and $\lfloor K/2 \rfloor$. For four options
these are ABCD and CDAB, where the published reversal is DCBA. At $K{=}2$ the definitions coincide.

\paragraph{The flip rate is taken over the cyclic set.}
ReCon \citep{cocchieri-etal-2026-remedqa} spans seven format variants; we count changes over the $K$ cyclic
orderings.

\paragraph{Label-free analogues of RStd and CKLD.}
RStd \citep{zheng2024large} is the standard deviation across classwise recalls, read under the premise
that gold answers are randomly positioned; our factorisation instead requires that every option of every
item visit every position equally often. Our selection-rate SD uses unconditional selection frequencies and
so reads no gold label. CKLD
\citep{choi-etal-2025-mitigating} computes $D_{\mathrm{KL}}(\mathrm{gold}\,\|\,\mathrm{predictions})$.
Our forward KL of selection frequencies to uniform, $D_{\mathrm{KL}}(f\,\|\,\mathrm{unif})$, substitutes uniform for gold and reverses the arguments.

\section{Closest related measurements and mitigations}
\label{app:related}

This section extends the comparisons of \cref{sec:related} to the closest measurements and mitigations.

\paragraph{Consistency measures for multiple choice.}
\citet{cocchieri-etal-2026-remedqa} define ReCon over seven input formats, including open answers mapped
back to the options. Its complement counts any inconsistency on that variant set, so it is analogous to,
but not numerically identical to, the cyclic flip rate. Their high multiple-choice scores with lower
cross-format consistency do not contradict the flip cap, which bounds the flip rate by accuracy averaged
over the same evaluated variants.
\citet{gupta2024changing} count an item as correct only when it is answered correctly under both the
original and a shuffled order. SCORE \citep{nalbandyan-etal-2025-score} reports a pairwise consistency
rate under reordered choices, paraphrased prompts and sampling seeds beside mean accuracy and accuracy ranges across those
perturbations, and finds consistency correlated with mean accuracy across models, as the flip rate and accuracy are in our
panel.
Consistency without correctness is also recognised. \citet{li-etal-2024-multiple} find that consistency
over repeated queries need not imply correctness, and \citet{cocchieri-etal-2026-remedqa} report ReCon
beside ReAcc, which also
requires every answer to be correct. Their repeated-query analysis is within items, whereas our
association with accuracy is across cells under cyclic reordering, and it likewise does not say whether an
individual stable answer is correct. We carry the distinction to mitigation, measuring whether the items
that stop flipping become stably correct or stably wrong.

\paragraph{Consistency and position preference are already reported together.}
Judge audits report position consistency, how often a verdict survives a swap, beside a direction score
built to shrink towards zero as consistency rises \citep{shi-etal-2025-judging}, or a swap-conflict rate
beside position-conditioned win rates \citep{wang-etal-2024-large-language-models-fair}. The raw fairness
score of \citet{shi-etal-2025-judging} already combines how often a pair is judged inconsistently with the
direction of the preference among those pairs. For two candidates without ties it follows from their
definition that its absolute value, divided by the number of swapped pairs, is twice the pooled
selection-rate SD; their final, normalised score is a different quantity.
Both audits also find inconsistency more frequent when the two candidates are close in quality, measured
through scores or win rates, and \citet{shi-etal-2025-judging} report preferred positions that reverse with
the task. The canonical margin of \cref{sec:factors} is a related but different measure of uncertainty, and
neither identifies missing knowledge as a cause.
In multiple choice, \citet{guda2025quantifying} evaluate per-item and pooled measures together and find
that a mitigation gain depends on the measure: averaged over their models and datasets, PriDe lowers RStd while raising the Permutation Bias Metric (PBM) and CKLD, whereas their
consistency-oriented fine-tuning lowers PBM without lowering CKLD.
The pooling identity makes this combination exact for the pooled selection-rate SD under any balanced
multiple-choice design, with a correction for unbalanced designs and an upper bound for the KL divergence
from uniform.

\paragraph{Gold-dependent measures: RStd and CKLD.}
RStd \citep{zheng2024large} and CKLD \citep{choi-etal-2025-mitigating} compare predictions with gold
labels, through classwise recalls and the gold answer distribution. The pooled statistics we compare
are label-free analogues computed from selection frequencies against uniform. The selection-rate SD factors
through the chosen-position profile $g$; RStd evaluated with every item's correct option balanced across
positions factors through the gold-dependent profile $h$ instead, and canonical-order RStd has no such
identity (\cref{sec:method_metrics}).
\citet{choi-etal-2025-mitigating} observe sharper choice imbalance among incorrect answers and transfer
bias vectors taken from hidden states across datasets. Our partition instead splits the items by whether
their choice stays the same under reordering, which yields an exact identity under balanced orderings, and
we do not identify their hidden-state bias vector with the option-symbol prior.
\paragraph{Corrections and direction.}
PriDe \citep{zheng2024large} estimates a prior on each of a few estimation items and applies the
average of these priors to the remaining items, so its correction is itself pooled. It uses permutations
of the estimation items to cancel the content terms in log probabilities under its multiplicative,
order-invariant model, whereas the pooling identity counts choices and needs no probability model.
Dividing by an estimated shared prior does not enforce agreement across orderings, although exact prior
removal under PriDe's assumptions would recover an order-invariant content distribution.
\citet{zheng-etal-2026-mitigating} train for agreement across orderings with a permutation-aware
GRPO that rewards answers matching the unique most frequent choice over a permutation group. That reward acts on
what the flip rate counts, whereas the PriDe reductions we split come mostly from the skew.

\end{document}